\tikzset{>={[round,sep]Stealth}}
\tikzset{
   shift left/.style={decorate,decoration={simple line,raise=#1}},
   shift right/.style={decorate,decoration={simple line,raise=-1*#1}},
}
\newtheorem{theorem}{Theorem}
\newtheorem{lemma}[theorem]{Lemma}
\newtheorem{observation}[theorem]{Observation}
\newtheorem{remark}[theorem]{Remark}
\newtheorem{definition}{Definition}
\newtheorem{corollary}{Corollary}
\newcommand{\IR}{\mathbb{R}}
\newcommand{\existsR}{\ensuremath{\mathsf{\exists\IR}}}
\newcommand{\forallR}{\ensuremath{\mathsf{\forall\IR}}}
\newcommand{\ETR}{\ensuremath{\mathrm{ETR}}}
\newcommand{\QUAD}{\ensuremath{\mathrm{QUAD}}}
\newcommand{\ETRpp}{\ensuremath{\mathrm{ETR}^{++}}}
\newcommand{\QUADpp}{\ensuremath{\mathrm{QUAD}^{++}}}
\newcommand{\DIM}{\ensuremath{\mathrm{DIM}}}
\newcommand{\FFEAS}{\ensuremath{4\mathrm{FEAS}}}
\newcommand{\threeSAT}{\ensuremath{\mathrm{3SAT}}}
\newcommand{\NP}{\ensuremath{\mathsf{NP}}}
\newcommand{\coNP}{\ensuremath{\mathsf{coNP}}}
\newcommand{\PSPACE}{\ensuremath{\mathsf{PSPACE}}}
\newcommand{\PD}{\mathrm{PD}}
\newcommand{\fib}{\mathcal{F}}
\newcommand{\im}{\operatorname{im}}
\title{On the Complexity of Identification in \\ Linear Structural Causal Models}
\begin{document}

\author[1]{Julian D\"{o}rfler\thanks{Contributing equally first authors.}\, }%{Saarland University, Germany}%{jdoerfler@cs.uni-saarland.de}{}{}
\author[2]{Benito van der Zander$^{*\,}$}%{University of Lübeck, Germany}%{b.vanderzander@uni-luebeck.de}{}{}
\author[1]{Markus Bl\"{a}ser\thanks{Contributing equally last authors.}\, }%{Saarland University, Germany}%{mblaeser@cs.uni-saarland.de}{}{}
\author[2]{Maciej Li\'{s}kiewicz$^{\dagger\,}$}%{University of Lübeck, Germany}%{maciej.liskiewicz@uni-luebeck.de}{}{}

\affil[{ }]{\footnotesize\{mblaeser,jdoerfler\}@cs.uni-saarland.de; \{maciej.liskiewicz,b.vanderzander\}@uni-luebeck.de}
\affil[1]{\footnotesize Saarland University, Germany}
\affil[2]{\footnotesize University of Lübeck, Germany}

\maketitle

\begin{abstract}
Learning the unknown causal parameters of a linear structural causal 
model is a fundamental task in causal analysis. The task, known as the 
problem of identification, asks to estimate the parameters of the model from a
combination of assumptions on the graphical structure of the model and 
observational data, represented as a non-causal covariance matrix.
In this paper, we give a new sound and complete algorithm for generic 
identification which runs in polynomial space. By standard simulation 
results, this algorithm has exponential running time which vastly improves 
the state-of-the-art double exponential time method using a Gr\"obner basis 
approach. The paper also presents evidence that parameter identification 
is computationally hard in general. In particular, we prove, that the task
asking whether, for a given feasible correlation matrix, there 
are exactly one or two or more parameter sets explaining the observed 
matrix, is hard for $\forallR$, the co-class of the existential theory 
of the reals. In particular, this problem is $\mathsf{co}\NP$-hard.
To our best knowledge, this is the first hardness result for some notion 
of identifiability.
\end{abstract}

\section{Introduction}
Recognizing and predicting the causal effects and distinguishing
them from purely statistical correlations is an important task of empirical sciences. 
For example, identifying the causes of disease and health outcomes is of great significance 
in developing new disease prevention and treatment strategies.
A common approach for establishing causal effects is through randomized controlled trials 
(Fisher, \citep{fisher1936design}) -- called the gold standard of experimentation -- which, however, 
requires physical intervention in the examined system. Therefore, in many practical applications
experimentation is not always possible due to cost, ethical constraints, or technical feasibility.
E.g., to learn the effects of radiation on human health one cannot conduct interventional studies 
involving human participants. In such cases, a researcher may use an alternative approach and
establish cause-effect relationships by combining existing observed data with the knowledge of 
the structure of the system under study. 
This is called the problem of \emph{identification}
in causal inference (Pearl, \citep{Pearl2009}) and the approach is commonly used in various
fields, including modern ML.

A key ingredient of this framework is the way the underlying structure models the true mechanism behind the system. In general, this is done using structural causal models (SCMs) \citep{Pearl2009,bareinboim2022pearl}.
In this work, we focus on the problem of identification in linear SCMs, known also as structural 
equation models (SEMs) \citep{bollen2014structural,duncan2014introduction}.
They represent the causal relationships between observed 
random variables assuming that each variable 
$X_i$, with $i=1,\ldots, n$ is linearly dependent on  the remaining variables and 
an unobserved error term $\varepsilon_i$ of normal distribution with zero mean:
$X_j=\sum_{i} \lambda_{i,j} X_i +\varepsilon_i$. The model implies the existence of 
some  covariance matrix $\Omega=(\omega_{i,j})$ between the error terms.
In this paper, we consider  \emph{recursive models}, i.e., we assume that, for all $i>j$, we have $\lambda_{i,j} =0$.

Linear SCMs can be represented as a graph with nodes  $\{1,\ldots,n\}$ and with directed and bidirected edges. 
A directed edge $i\to j$ represents a linear influence $\lambda_{i,j}\not= 0$ of a parent node 
$i$ on its child $j$. A bidirected edge $i \leftrightarrow j$ represents a correlation 
$\omega_{i,j}\neq 0$ between error terms $\varepsilon_i$ and $\varepsilon_j$ (cf.~Figure~\ref{fig:IV}). 

\begin{wrapfigure}{R}{0.28\textwidth}
\vspace*{-4mm}
	\begin{tikzpicture}[scale=0.9]
	\footnotesize
    	\node[circle, draw, scale=0.9] (z) at (0,0) {1};
    	\node[circle, draw, scale=0.9] (x) at (1.5,0) {2};
    	\node[circle, draw, scale=0.9] (y) at (3,0) {3};
    	\draw [->] (z) -- node [midway,below] {$\lambda_{1,2}$} (x);
    	\draw [->] (x) -- node [midway,below] {$\lambda_{2,3}$} (y);
    	\draw [<->] (x) edge [bend left=45] node [midway,above]{$\omega_{2,3}$}(y);
  \end{tikzpicture}
\caption{An IV example. \label{fig:IV}}
\end{wrapfigure}
Writing the coefficients of all directed edges as adjacency matrix 
$\Lambda=(\lambda_{i,j})$ and the coefficients of all bidirected edges as adjacency 
matrix $\Omega=(\omega_{i,j})$, the covariances $\sigma_{i,j}$ between each pair 
of observed variables $X_i$ and $X_j$ can be calculated as matrix $\Sigma = (\sigma_{i,j})$:
\begin{equation}
\Sigma = (I - \Lambda)^{-T} \Omega (I-\Lambda)^{-1}, \label{eqn:SEM}
\end{equation}
where $I$ is the identity matrix \cite{foygel2012half}.  
Knowledge of the parameters $\lambda_{i,j}$ allows for the prediction of 
all causal effects in the system and the key task here is to learn $\Lambda$ 
assuming  $\Omega$ remains unknown. This leads to the formulation of the 
identification problem in linear SCMs as solving for the parameter $\Lambda$ 
using Eq.~\eqref{eqn:SEM}. If the problem asks to find symbolic solutions 
involving only symbols $\sigma_{i,j}$, we call it the problem of 
\emph{symbolic identification}. In the case when the parameter can be 
determined uniquely almost everywhere using alone $\Sigma$, 
we call the instance to be \emph{generically identifiable}
(for a formal definition, see Sec.~\ref{sec:preliminaries}).
If the goal is to find, for a given $\Sigma$ of rational numbers, 
the numerical solutions, we call it the problem of \emph{numerical identification}.
In this paper, we study the computational complexity of both variants of the problem.

{\bf Previous Work.} 
The identification in linear SCMs and its applications has been 
a subject of research interest for many decades, including 
the early work in econometrics and agricultural sciences
\citep{wright1921correlation,wright1928tariff,fisher1966identification,bowden1984instrumental}.
Currently, it seems, that one of the most challenging tasks in this field
is providing effective computational methods 
that allow automatic finding of solutions, both for symbolic and for numeric variants,
or to provide evidence that the problems are computationally intractable.

The generic identification can be established using standard algebraic 
tools for solving symbolic polynomial equations~\eqref{eqn:SEM}. Such an approach 
provides a \emph{sound} and \emph{complete} method, i.e., it is guaranteed to identify
all identifiable instances. However, common algorithms for solving 
such equations usually use Gröbner basis computation, whose time complexity 
is doubly exponential in the worst case \citep{garcia2010identifying}. 
%Therefore, they are often too slow even for cases of very small sizees 
So far, it has remained widely open whether the  double exponential function
is a sharp upper bound on the computational complexity of the generic identifiability.

Most approaches to solving the problem in practice are based on instrumental variables, 
in which the causal direct effect is identified as a fraction of two covariances 
\citep{wright1928tariff,bowden1984instrumental}. For example, in the linear model
shown in Figure~\ref{fig:IV}, one can calculate first $\lambda_{1,2} = \sigma_{1,2}$
and then $\lambda_{2,3} = \frac{\lambda_{1,2}\lambda_{2,3}}{\lambda_{1,2}} = \frac{\sigma_{1,3}}{\sigma_{1,2}}$.  
The variable $X_1$ is then called an instrumental variable (IV). 
This method is sound but not complete, that is, when it identifies a parameter,
then it is always correct. However, when the method fails due to a missing IV, 
then the parameter can still be identified by other means.
This approach has inspired intensive research aimed at providing
computational methods that may not be complete 
but enable effective algorithms and identify a 
significantly large number of cases.

To the most natural extensions of the simple IVs belong 
conditional IVs (cIV) \citep{bowden1984instrumental,PearlConditionalIV}.
The corresponding identification method is based on an efficient, 
polynomial time algorithm for finding conditional IVs \citep{IJCAIInstruments}.
More complex criteria and methods proposed in the literature, which 
are also accompanied by polynomial time algorithms, involve
instrumental sets (IS) \citep{BritoPearlUAI02}
half-treks (HTC) \citep{foygel2012half},
instrumental cutsets (IC) \citep{kumor2019instrumentalCutsets},
auxiliary instrumental variables (aIV) \citep{chen2015incorporating}, 
The generalized HTC (gHTC) 
\citep{chen2016identification,identifyingEdgeWiseDeterminantalDrton}
and auxiliary variables (AVS) 
\citep{chen2016identification,chen2017identification}  
can be implemented in polynomial time 
provided that the number of incoming edges to each
node in the causal graph is bounded.
The methods based on generalized IS (gIS),
a simplified version of the criterion (scIS), and generalized AVS (gAVS) 
 appeared  to be computationally intractable
\citep{BritoPearlUAI02,brito2010instrumental,brito2002graphical,instrumentalVariablesZander2016,chen2017identification}.
The auxiliary cutsets (ACID) algorithm \citep{kumor2020auxiliaryCutsets} 
subsumes all the above methods in the sense that
it covers all the instances identified by them.
Recently  \citep{TreeID2022vanderzander,TreeID2024gupta} provide
the TreeID algorithm for 
identification in tree-shaped linear models.
TreeID is incomparable since it is complete for the subclass of tree-like SCMs. However, TreeID does not even work for general SCMs, which is the focus of our work.
Figure~\ref{fig:identification:sem:methods} summarizes these results.

\begin{figure} %[h]
\tikzset{semFast/.style={draw=green!70!black,very thick}}
\tikzset{semSlow/.style={draw=red!80!black,very thick}}
\tikzset{semComplete/.style={draw=blue!60!black,very thick}}
\begin{tikzpicture}[xscale=4,every node/.style={draw},yscale=1.2,xscale=0.9]
\footnotesize
\node[semFast] (IV) at (0.2,0) {IV \citep{wright1928tariff}};
\node[semSlow] (gIS) at (1, 0.7) {gIS, scIS \citep{brito2002graphical,instrumentalVariablesZander2016}};
\node[semFast,text width=5.5cm] (HTC) at (1.3,0) {IS, HTC, AVS, gHTC \citep{BritoPearlUAI02,foygel2012half,chen2017identification,chen2016identification,identifyingEdgeWiseDeterminantalDrton} };
\node[semFast] (cIV) at (1,-0.7) {cIV, aIV \citep{bowden1984instrumental,chen2015incorporating}};
\node[semSlow] (qAVS) at (2,0.7) {gAVS \citep{chen2017identification}};
\node[semFast] (IC) at (2.35,0) {IC \citep{kumor2019instrumentalCutsets}};
\node[semFast] (cAV) at (1.95,-0.7) {cAV \citep{chen2017identification}};
\node[semFast] (treeID) at (2.7,-0.7) {TreeID \citep{TreeID2022vanderzander,TreeID2024gupta}};
\node[semFast] (ACID) at (2.85,0) {ACID \citep{kumor2020auxiliaryCutsets}};
\draw[->] (IV)+(0.165,0.125) --  (0.663,0.7); %\draw[->] (IV) edge (gIS) 
\draw[->]  (IV) edge (HTC);
 \draw[->]  (IV)+(0.165,-0.125) --  (0.7,-0.7); %(cIV);
\draw[->] (gIS) edge (qAVS) 
          (HTC) edge (IC) 
          (cIV) edge (cAV)
          (HTC) edge (qAVS);
\draw[->] (qAVS) edge (ACID) 
          (IC)   edge (ACID) 
          (cAV)  edge (ACID);
\node[semComplete,text width=2.5cm] (G) at (3.55,0) 
{
Complete Methods\\
{\tiny $\bullet$} double-EXP\\
 \quad\quad Gr\"{o}bner \citep{garcia2010identifying}\\
{\tiny $\bullet$}  PSPACE \\ \quad\quad this paper
};
\draw[->] (ACID) edge (G) ;
\draw[->] (3.01,-0.7) -- (G) ; %(treeID) edge (G) ;
\end{tikzpicture}
\caption{Methods for generic identification in linear SCMs.
An arrow from methods $A \to B$ means $B$ subsume methods $A$, 
i.e., any instance that can be identified by any of methods $A$ can be 
identified by method $B$ and this inclusion is proper.  
Green boxes mean there exist polynomial-time 
algorithms to apply the method, a red box means no such algorithm 
is known or the method has been proven to be NP-hard. 
The blue box includes the complete methods.
}\label{fig:identification:sem:methods}
\end{figure}
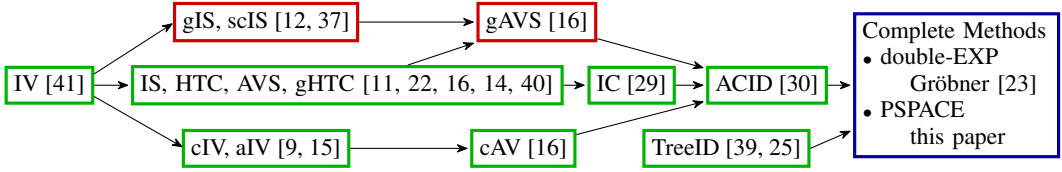

Numerical parameter identification, known in the literature as 
SEM model parameter estimation, has been the subject of a considerable 
amount of research, which has resulted in significant progress in theoretical 
understanding and development of estimation methods \citep{anderson1956statistical,joreskog1969general,browne1974generalized,muthen1983latent,bollen1989structural,hu1992can}.
Currently, in practical applications (e.g. in econometrics, psychometrics or biometrics),
methods based on maximum likelihood (ML)
%maximum likelihood mean adjusted (MLM), 
or generalized least squares estimator (GLS) are commonly used to find model parameters. However, despite the great importance of this problem and considerable effort 
in method development, the computational complexity of the parameter estimation 
problem remains unexplored. In our work, we provide, to our knowledge, the first 
hardness result for a very basic variant of the SEM parameter estimation problem, which 
we call numerical identification.

{\bf Our Contribution.} We improve significantly the best-known 
upper bound on the computational complexity for generic identification 
and provide evidence that parameter identification is computationally 
hard in general. In more detail, our contributions are as follows:

{\tiny $^\bullet$}  We provide a polynomial-space algorithm for sound and complete 
    	generic parameter identification in linear models. This gives an exponential 
	running time which vastly improves the state-of-the-art 
    	double exponential time method using Gröbner basis.
	In our approach, we formulate generic identifiability as a formula over real variables with both existential and universal quantifiers and then use Renegar's algorithm \cite{renegar1992computational}. \\
{\tiny $^\bullet$}  Our constructive technique allows us to prove an $\exists\forall\IR$ (and $\forall\exists\IR$) upper bound 
	on generic identifiability, for the well-studied complexity class $\exists\forall\IR$ (see Sec.~\ref{sec:real} for definitions). It is 
	an intermediate class between $\NP$ and $\PSPACE$.\\
 %and defined as the closure of the ETR under polynomial time many-one reductions.\\
{\tiny $^\bullet$}  We prove that numerical identification is hard for the complexity class 
  	$\forallR$. 
	In particular, the problem is $\mathsf{co}\NP$-hard. 
	Our~complexity characterization is quite precise since we show a 
	(promise) $\forallR$ upper bound for numerical identification.
	To the best of our knowledge, 
 	this is the first hardness result for some notion of identifiability. \\
{\tiny $^\bullet$}  On the other hand, we show that numerical identifiability can 
	be decided in polynomial space.\\
 {\tiny $^\bullet$} 	
 	If an instance is non-identifiable, then an important task is to identify 
	as many model parameters as possible. We are particularly interested 
	in identifying the parameters of specific edges in the graph representing the linear model. 
	In the paper, we obtain, for ``edge identifiability'',  the same results as 
	for the common identifiability problem. Since these proofs are essentially the same, they can be found in Appendix~\ref{sec:edge}.

\section{Preliminaries}\label{sec:preliminaries}

\subsection{The Problems of Identification in Linear Causal Models} \label{sec:ident}

A mixed graph is a triple $G = (V,D,B)$ where $V:= \{1,\ldots, n\}$ is a finite set of nodes and 
$D \subseteq V \times V$ and $B \subseteq \binom V2$ are two sets of directed and bidirected edges, respectively.  
Let $\IR^D$ be the set of matrices $\Lambda=(\lambda_{i,j}) \in \IR^{n\times n}$ with
$ \lambda_{i,j} = 0$ if $i\to j$ is not in $D$ and 
let $\PD(n)$ denote the cone of positive definite $n\times n$ matrices. Let
$\PD(B)$ be the set of matrices $\Omega=(\omega_{i,j}) \in \PD(n)$ 
with $\omega_{i,j}=0$ if $i \not= j$ and $i \leftrightarrow j$ is not an edge in $B$.
In this work, we will only consider recursive models, i.e. we assume that, 
for all $i>j$ we have $\lambda_{i,j}= 0$.
Thus, the directed graph $(V,D)$  accompanied  with the model
is acyclic. In the paper, we will assume w.l.o.g. that the nodes are topologically sorted, i.e., 
 there are no edges $i \to j$ with $i > j$.

Denote by ${\cal N}_n(\mu,\Sigma)$ the multivariate normal distribution with 
mean $\mu\in \IR^n$
and covariance matrix $\Sigma$.
The linear SCMs ${\cal M}(G)$ associated with
$G = (V,D,B)$ is the family of multivariate normal distributions
${\cal N}_n(0,\Sigma)$
with $\Sigma$ satisfying Eq.~\eqref{eqn:SEM},
for $\Lambda\in \IR^D$ and $ \Omega\in \PD(B)$.
A  model in ${\cal M}(G)$
is specified in a natural way in terms of a system of linear structural equations:
$ X_j=\sum_{i\in\text{pa}(j)} \lambda_{i,j} X_i +\varepsilon_i,$
for  $j=1,\ldots, n,$
where $\text{pa}(j)$ denote the parents of $j$ in $G$.
If $\varepsilon =(\varepsilon_1, \ldots,\varepsilon _n)$ is a random vector with the multivariate normal distribution
${\cal N}_n(0,\Sigma)$ and $\Lambda\in \IR^D$,
then the random vector $X=(X_1,\ldots,X_n)$ is well
defined as a solution to the equation system 
and follows a centered multivariate normal distribution with covariance 
matrix $(I - \Lambda)^{-T}  \Omega(I - \Lambda)^{-1}$ (see, e.g.~\cite{drton2011global}).

For a given (acyclic) mixed graph  $G = (V,D,B)$ define the parametrization map
$$
   \phi_G:(\Lambda,\Omega) \mapsto (I - \Lambda)^{-T}  \Omega(I - \Lambda)^{-1}
$$
and let $\Theta:=\IR^D\times \PD(B)$. We define that $G$ is 
 \emph{globally} identifiable if  $\phi_G$ is injective on $\Theta$ (\cite{drton2011global}).

Global identification can be decided easily, see \cite[Theorem 2]{drton2011global}. 
%We call a directed graph with at least two nodes an 
%\emph{arborescence converging} to
%node $i$ if its edges form a spanning tree with a directed path from any node $j \not= i$ to $i$.
%
%\begin{theorem}[\cite{drton2011global}, Theorem 2]
%The parametrization  $\phi_G$ for an acyclic mixed graph $G = (V,D,B)$ 
%fails to be injective if and only if there is an induced subgraph $G_A$,
%$A \subseteq V$, whose directed part $(A,D_A)$ contains a converging arborescence and
%whose bidirected part $(A,B_A)$ is connected. If $\phi_G$ is injective, then its inverse
%is a rational map.
%\end{theorem}
%
%\begin{theorem}[[\cite{drton2011global}, Corollary 1]
%If the acyclic mixed graph $G$ has at most three nodes, then
% $\phi_G$ is injective if and only if $G$ is simple (bow-free), i.e., there is at most one edge
%between any pair of nodes in $G$. There are exactly two unlabeled simple
%acyclic mixed graphs on four nodes with  $\phi_G$  not injective.
%\end{theorem}
However, it is a very strong property. For instance, as seen in the introduction, in Figure~\ref{fig:IV}, we can recover the parameter 
$\lambda_{2,3}$ as  
$\frac{\sigma_{1,3}}{\sigma_{1,2}}$.
So identification fails only in the (very unlikely case) that $\sigma_{1,2} = 0$.
This leads to the concept of \emph{generic identifiability}

\begin{definition}[Generic Identifiability, \cite{foygel2012half}]
The mixed graph $G$ is said to be \emph{generically} identifiable if
 $\phi_G$ is injective on the complement $\Theta \setminus {\cal V}$ of a proper (i.e., strict) 
 algebraic subset ${\cal V}\subset \Theta$.
\end{definition}

Given matries $\Lambda_0 \in \IR^D$ and $\Omega_0 \in \PD(B)$, the corresponding
\emph{fiber} is defined by
\[
   \fib_G(\Lambda_0,\Omega_0) = \{ (\Lambda,\Omega) \mid 
   \phi_G(\Lambda,\Omega) = \phi_G(\Lambda_0,\Omega_0),  
   \Lambda \in \IR^D, \Omega \in \PD(B) \}.
\]
A fiber contains all pairs of matrices that induce the same observed covariance matrix $\Sigma$.
For $\Sigma \in \im \phi_G$, we also write $\fib_G(\Sigma)$ for the fiber belonging to $\Sigma$.
We can phrase identifiabilty in terms of fibers:
\begin{itemize}
    \item $G$ is globally identifiable, if $|\fib_G(\Lambda_0,\Omega_0)| = 1$ for all
    $\Lambda_0 \in \IR^D$ and $\Omega_0 \in \PD(B)$.
    \item $G$ is generically identifiable, if $|\fib_G(\Lambda_0,\Omega_0)| = 1$ for Zariski almost all
    $\Lambda_0 \in \IR^D$ and $\Omega_0 \in \PD(B)$.
\end{itemize}

Generic identifiability asks whether all parameters are almost always identifiable.
It is also of interest to ask whether a single parameter $\lambda_{ij}$ is almost always identifiable.
For this, we consider the projection of the fiber on the single parameter, which we will also call an \emph{edge fiber}:
\[
   \fib^{i,j}_G(\Lambda_0,\Omega_0)
   %= \{ \Lambda_{(ij)} \mid \phi_G(\Lambda,\Omega) = \phi_G(\Lambda_0,\Omega_0), \Lambda \in \IR^D, \Omega \in \PD(B) \}
   = \{ \Lambda_{i,j} \mid (\Lambda, \Omega) \in \fib_G(\Lambda_0,\Omega_0) \}.
\]
\begin{itemize}
    \item $\lambda_{ij}$ is generically edge identifiable, if $|\fib^{i,j}_G(\Lambda_0,\Omega_0)| = 1$ for Zariski almost all
    $\Lambda_0 \in \IR^D$ and $\Omega_0 \in \PD(B)$.
\end{itemize}

For edge identifiability, we only consider parameters $\lambda_{i,j}$ since parameters $\omega_{k,l}$ can be recovered from parameters $\lambda_{i,j}$ \citep{drton2018algebraic}.

Global and generic identifiability are properties of the given mixed graph. In this work,
we also study identification as a property of the observed numerical data, i.e.,
of the observed covariance matrix $\Sigma$.

\begin{definition}[Numerical Identifiability]
Given an acyclic mixed graph $G = (V,D,B)$ and a feasible matrix $\Sigma$, decide whether the parameters are uniquely identifiable, i.e.~if $|\fib_G(\Sigma)|=1$?
\end{definition}
    
Note that this is a promise problem. We assume that $\Sigma$ is feasible, i.e., in the image of $\phi_G$. Therefore, we shall also study the feasibility problem: Given $\Sigma$, is it contained in $\im \phi_G$?

Similarly we can also define numerical edge identifiability: For a given feasible $\Sigma$, test whether the edge fiber $\Sigma$ belongs to has size $1$ or $> 1$.

\subsection{The (Existential) Theory of the Reals}
\label{sec:real}

The existential theory of the reals ($\ETR$) is the set of true sentences
of the form
\begin{equation} \label{eq:etr:1}
   \exists x_1 \dots \exists x_n \ \varphi(x_1,\dots,x_n),
\end{equation}
where $\varphi$ is a quantifier-free Boolean formula over the basis $\{\vee, \wedge, \neg\}$
and a signature consisting of the constants $0$ and $1$, the functional symbols
$+$ and $\cdot$, and the relational symbols $<$, $\le$, and $=$. The sentence
is interpreted over the real numbers in the standard way.
The theory forms its own complexity class $\existsR$ which is 
defined as the closure of $\ETR$ under polynomial-time many-one reductions.
Many natural problems have been shown to be complete for $\ETR$, for instance
the computation of Nash equilibria \citep{DBLP:journals/mst/SchaeferS17}, the famous
art gallery problem \citep{abrahamsen2018art}, or training neural networks \citep{bertschinger2023training}, just to mention a few.

It turns out that one can simplify the form of an $\ETR$-instance. We can get rid
of the relations $<$ and $\le$ and it is sufficient to consider only Boolean conjunctions. 
More precisely, the following problem is $\existsR$-complete: Given polynomials $p_1,\dots,p_m$ in variables
$x_1,\dots,x_n$, decide whether there is a $\xi \in \IR^n$ such that 
\begin{equation} \label{eq:Poly}
  p_1(\xi) = \dots = p_m(\xi) = 0.
\end{equation}
By Tseitin's trick, we can assume that all polynomials are of one of the forms
\begin{equation} \label{eq:tseitin}
  ab - c, \enspace a + b - c, \enspace a - b, \enspace a - 1, \enspace a
\end{equation}
and all variables in each of the polynomials are distinct. Note that all polynomials
in (\ref{eq:tseitin}) have degree at most two. Therefore, this problem
is also called the feasibility problem of quadratic equations $\QUAD$. 
For a proof, see e.g.\ \cite{DBLP:journals/mst/SchaeferS17}.

\paragraph{Universal Quantification.}
If, instead of considering existentially quantified true sentences, we consider universally quantified true sentences of the form
\begin{equation} \label{eq:foralltr:1}
   \forall x_1 \dots \forall x_n \varphi(x_1,\dots,x_n),
\end{equation}
where $\varphi$ is again a quantifier-free Boolean formula, and form the closure under polynomial-time many-one reductions, we obtain the complexity class $\forallR$.
Using De Morgan's law, it is easy to see the well-known fact that $\forallR = \mathsf{co\text{-}\existsR}$, i.e.\ it is the complement class of $\existsR$.

It is also possible to alternate quantifiers, giving rise to a whole hierarchy, see \cite{DBLP:journals/mst/SchaeferS24}. We call the corresponding classes 
$\exists\forall\IR$, $\forall\exists\IR$, \dots

\paragraph{Complexity of $\existsR$ and $\forallR$.}
It is easy to see that quantification over real variables can be used to simulate quantification over Boolean variables by adding the constraint $x(x-1) = 0$.
This way we can convert $\threeSAT$-formulas to $\ETR$-formulas, proving the well-known containment $\NP \subseteq \existsR$.

With his celebrated result about quantifier elimination, Renegar \cite{renegar1992computational} proved that the truth of any sentence over the reals with a constant amount of quantifier alternations is decidable in $\PSPACE$.
This in particular implies
\begin{align}
    \NP \subseteq \existsR &\subseteq \PSPACE & \coNP \subseteq \forallR &\subseteq \PSPACE
\end{align}
While all these inclusions are believed to be strict, it is unknown for all of them.

\section{Finding another solution}

Numerical identification is a promise problem, we assume that the given input is feasible. 
Being a promise problem means that an algorithm for numerical identification should output the correct answer whenever the input is feasible. But it can output anything when the input is not feasible. We give some further information about promise problems in Appendix~\ref{sec:promise} for the reader's convenience.

For our hardness proof, we need to look at instances of $\ETR$ or $\QUAD$ that are satisfiable. Of course, deciding whether a satisfiable instance is satisfiable is a trivial task. So the task will be to decide whether the satisfiable instance has another solution. We call the corresponding
promise problems $\ETRpp$ and $\QUADpp$.

It turns out that these promise problems are $\existsR$-hard. Since $\QUADpp$
is a special case of $\ETRpp$, it suffices to prove this for $\QUADpp$.
Let $y$ be an extra variable. We will plant an extra solution into the 
system~(\ref{eq:Poly}):
\begin{align}
    y (y-1) & = 0  \label{eq:pp:1} \\
    y x_i   & = 0 \qquad i = 1,\dots,n \label{eq:pp:2} \\
    (y-1) p_j & = 0 \qquad j = 1,\dots,m \label{eq:pp:3}
\end{align}

\begin{lemma}
\label{lem:extra_solution}
The system above has the following solutions:
\begin{enumerate}
\item $y = 1$, $x_1 = \dots = x_n = 0$
\item $y = 0$, $x_1 = \xi_1,\dots,x_n = \xi_n$, where $\xi \in \IR^n$ is any solution to the original instance. 
\end{enumerate}
In particular, the system always has a solution. It has more than one solution iff the original
$\QUAD$-instance is satisfiable. 
\end{lemma}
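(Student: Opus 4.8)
The argument is a direct case analysis on the value of $y$ forced by Eq.~\eqref{eq:pp:1}. First I would observe that $y(y-1)=0$ over $\IR$ is equivalent to $y \in \{0,1\}$, so every solution of the system falls into exactly one of two cases according to which value $y$ takes.

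In the case $y = 1$, Eq.~\eqref{eq:pp:2} becomes $x_i = 0$ for every $i = 1,\dots,n$, while Eq.~\eqref{eq:pp:3} becomes $0 \cdot p_j = 0$, which holds vacuously regardless of the $p_j$. Hence the unique solution with $y = 1$ is $(y,x_1,\dots,x_n) = (1,0,\dots,0)$, which is item~(1). In the case $y = 0$, Eq.~\eqref{eq:pp:2} holds vacuously, and Eq.~\eqref{eq:pp:3} becomes $-p_j = 0$, i.e.\ $p_j(x_1,\dots,x_n) = 0$ for all $j$. So the solutions with $y = 0$ are precisely the tuples $(0,\xi_1,\dots,\xi_n)$ where $\xi \in \IR^n$ ranges over the solution set of the original $\QUAD$-instance~\eqref{eq:Poly}; this is item~(2). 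Since these two cases are exhaustive, the solution set of the planted system is exactly the union described in the lemma.

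For the final two sentences: the solution of item~(1) exists unconditionally, so the system always has at least one solution. Moreover, any item~(2) solution has $y$-coordinate $0$, hence differs from the item~(1) solution (which has $y$-coordinate $1$) — this holds even when $\xi = 0$ is a solution of the original instance. Therefore the system has more than one solution precisely when an item~(2) solution exists, i.e.\ precisely when the original $\QUAD$-instance is satisfiable; otherwise item~(1) is the only solution.

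\textbf{Main obstacle.} There is essentially no obstacle here: the content is a short forcing argument, and the only point worth stating explicitly is the distinctness of the planted solution from every original solution, which is immediate from the $y$-coordinate. The conceptual work lies not in this lemma but in using it afterwards to transfer $\existsR$-hardness of $\QUAD$ to the ``another solution'' promise problem.
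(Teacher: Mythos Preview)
Your proposal is correct and follows essentially the same approach as the paper: a case split on $y\in\{0,1\}$ forced by \eqref{eq:pp:1}, reducing \eqref{eq:pp:2}--\eqref{eq:pp:3} in each case, and observing that the two families of solutions are distinguished by their $y$-coordinate. Your write-up is in fact slightly more careful than the paper's (you explicitly note distinctness even when $\xi=0$).
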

\begin{proof}
The first equation (\ref{eq:pp:1} constrains $y$ to be $\{0,1\}$-valued.
If $y = 0$, then the equations (\ref{eq:pp:2}) are trivially satisfied and
(\ref{eq:pp:3} reduces to the original instance (\ref{eq:Poly}).
If $y = 1$, then the equations (\ref{eq:pp:3}) are trivially satisfied and
(\ref{eq:pp:2} reduces to $x_1 = \dots = x_n = 1$. Note that in both cases
we always get different solutions since the $y$-value differs.
\end{proof}

Using the transformation in the lemma above, we can map any $\QUAD$-instance
into a $\QUADpp$-instance and obtain

\begin{corollary}
    \label{col:quadpp_hardness}
    \ETRpp and \QUADpp are $\existsR$-hard.  
\end{corollary}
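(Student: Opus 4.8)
The plan is to exhibit a polynomial-time many-one reduction from \QUAD{} --- which is $\existsR$-complete by the discussion around~(\ref{eq:tseitin}) --- to \QUADpp, and then to observe that \QUADpp{} trivially reduces to \ETRpp. Given a \QUAD-instance $p_1 = \dots = p_m = 0$ in variables $x_1,\dots,x_n$, the reduction outputs the planted system~(\ref{eq:pp:1})--(\ref{eq:pp:3}) in the variables $x_1,\dots,x_n,y$. This map is clearly computable in polynomial time. By Lemma~\ref{lem:extra_solution} the planted system is always satisfiable, so its image lies inside the promise of \QUADpp; and, again by Lemma~\ref{lem:extra_solution}, it has more than one solution precisely when the original \QUAD-instance is satisfiable. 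Hence an algorithm deciding \QUADpp{} on the output decides \QUAD{} on the input, which gives $\existsR$-hardness of \QUADpp.

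Two small points need attention. First, the output must be a syntactically legal \QUAD-instance, i.e.\ a conjunction of equations of the forms in~(\ref{eq:tseitin}) with pairwise distinct variables in each equation. The new equations $y(y-1)=0$, $y x_i = 0$ and $(y-1)p_j=0$ are not literally of that shape --- the last one even has degree up to three, since $p_j$ is itself quadratic. I would therefore re-apply Tseitin's trick: introduce a fresh variable for $y-1$ via an equation of the form $a+b-c$ (using the constant $1$), fresh variables for the products $y x_i$ and for the products of $(y-1)$ with $p_j$, and so on, breaking every new polynomial into the allowed gadgets. This increases the instance size only polynomially. One must then check that the re-encoding preserves the relevant dichotomy: every auxiliary variable is a polynomial function of $x_1,\dots,x_n,y$ forced by its defining equation, so the solution set of the re-encoded system is in bijection with that of~(\ref{eq:pp:1})--(\ref{eq:pp:3}); in particular it is nonempty, and it has size $1$ versus size $>1$ in exactly the same cases.

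Second, there is the usual promise-problem caveat: a reduction between promise problems need only behave correctly on inputs that satisfy the promise. Here we in fact obtain the stronger statement that \emph{every} output of the reduction satisfies the promise of \QUADpp{} (it is always satisfiable), so an ordinary many-one reduction suffices. Finally, since every conjunction of quadratic equations is in particular a quantifier-free formula in the signature of \ETR, and the question ``does this satisfiable instance have a second solution?'' is identical for \QUADpp{} and \ETRpp, the identity map is a valid reduction $\QUADpp \le \ETRpp$; combined with the above, this yields $\existsR$-hardness of \ETRpp{} as well.

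I do not expect a genuine obstacle: the entire combinatorial content is already packaged in Lemma~\ref{lem:extra_solution}, and what remains is the bookkeeping of the Tseitin re-encoding together with the routine verification that it neither creates nor destroys solutions. The only thing to be careful about is not to accidentally introduce a spurious second solution through an under-constrained auxiliary variable, which is why I would insist that each auxiliary variable be introduced by an equation that determines it uniquely from the variables already present.
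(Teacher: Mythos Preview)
Your proposal is correct and follows exactly the approach the paper takes: use the planted system~(\ref{eq:pp:1})--(\ref{eq:pp:3}) together with Lemma~\ref{lem:extra_solution} as a reduction from \QUAD{} to \QUADpp, and note that \QUADpp{} is a special case of \ETRpp. The paper's own proof is just the one-line remark ``Using the transformation in the lemma above, we can map any \QUAD-instance into a \QUADpp-instance''; your write-up is simply more careful about the bookkeeping (the Tseitin re-encoding to bring the planted equations back into the forms~(\ref{eq:tseitin}), and the promise-problem semantics), all of which the paper leaves implicit.
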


\section{Hardness of Numerical Identifiability}

This section is dedicated to proving:
\begin{theorem}
    \label{thm:num_ident_hardness}
    Numerical identifiability is $\forallR$-hard.
\end{theorem}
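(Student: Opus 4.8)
The plan is to reduce from $\QUADpp$, which is $\existsR$-hard by Corollary~\ref{col:quadpp_hardness}. Since $\forallR$ is the complement class of $\existsR$ (Section~\ref{sec:real}), it suffices to give a polynomial-time many-one reduction from $\QUADpp$ to the \emph{complement} of numerical identifiability, i.e.\ to the promise problem: given an acyclic mixed graph $G$ and a feasible $\Sigma$, decide whether $|\fib_G(\Sigma)|\ge 2$. Starting from a $\QUADpp$-instance --- a satisfiable quadratic system $p_1=\dots=p_m=0$ in variables $x_1,\dots,x_n$, where by Tseitin's trick (which preserves the solution set up to a bijection, hence the property of having at least two solutions) we may assume each $p_j$ has one of the forms in~\eqref{eq:tseitin} --- I will build a mixed graph $G$ and a feasible matrix $\Sigma$ together with a bijection between $\fib_G(\Sigma)$ and the solution set of the system; then $|\fib_G(\Sigma)|\ge 2$ holds iff the system has at least two solutions, which is the $\QUADpp$ question.

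The key simplification is that the positive-definiteness side condition in the definition of $\PD(B)$ becomes vacuous once $\Sigma$ is chosen positive definite: rewriting~\eqref{eqn:SEM} as $\Omega=(I-\Lambda)^T\Sigma(I-\Lambda)$ and using that $I-\Lambda$ is invertible, $\Omega$ is positive definite for \emph{every} $\Lambda\in\IR^D$. Hence for a positive definite $\Sigma$ one has $\fib_G(\Sigma)=\{\Lambda\in\IR^D : \omega_{ij}(\Lambda)=0 \text{ for all } i\ne j \text{ with } i\leftrightarrow j\notin B\}$, where $\omega_{ij}(\Lambda):=\big((I-\Lambda)^T\Sigma(I-\Lambda)\big)_{ij}$ is an explicit bilinear polynomial in the entries of $\Lambda$ with coefficients taken from $\Sigma$. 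So the whole task boils down to choosing which directed and bidirected edges $G$ has, and the entries of a positive definite $\Sigma$, so that this system of bilinear equations is, up to trivial equations $0=0$ and a relabelling of the entries of $\Lambda$, exactly the system $p_1=\dots=p_m=0$.

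I would assemble this from three kinds of gadgets. For each variable $x_v$ there is a \emph{variable edge} $s_v\to t_v$, with $s_v$ a fresh source and a parallel bidirected edge $s_v\leftrightarrow t_v$; then the pair $(s_v,t_v)$ imposes no constraint and $\lambda_{s_v t_v}$ is a genuinely free fiber coordinate, which will represent $x_v$. For each occurrence of a variable in an equation there is a \emph{copy gadget}, a fresh source $p$ with an edge $p\to i$ into the port node $i$ of that equation, a parallel bidirected edge $p\leftrightarrow i$, and a bidirected edge $s_v\leftrightarrow i$; a short evaluation of $\omega_{t_v,i}$ shows that putting $\sigma_{t_v p}=-1$, $\sigma_{s_v i}=1$ and the remaining relevant entries to $0$ forces the pair $(t_v,i)$ to satisfy $\lambda_{pi}=\lambda_{s_v t_v}$. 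For each equation $p_j$ there is an \emph{equation gadget} built from two port nodes $i,j$ (no bidirected edge between them) fed by the copy edges of the variables occurring in $p_j$; from the formula for $\omega_{ij}$ one reads off $O(1)$ entries of $\Sigma$, all in $\{-1,0,1\}$, that make $\omega_{ij}(\Lambda)$ equal $p_j$ evaluated at those copy-edge weights --- a directed path of length two realizes a product, a common parent realizes a sum, and a constant $1$ comes from a term $\sigma_{ij}=-1$. All entries of $\Sigma$ not dictated by a gadget are $0$, and each diagonal entry is set to the number of nodes of $G$; since off-diagonal entries have absolute value at most $1$, $\Sigma$ is strictly diagonally dominant and hence positive definite. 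Finally, whenever a nonzero entry $\sigma_{kl}$ forced by a gadget, or a directed edge carrying a free weight, would create an unwanted constraint at the pair $(k,l)$, I add the bidirected edge $k\leftrightarrow l$ to cancel it; after this bookkeeping every equation $\omega_{ij}(\Lambda)=0$ over a non-bidirected pair is either one of the $p_j$'s, a copy equation $\lambda_{pi}=\lambda_{s_v t_v}$, or a trivial $0=0$.

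What remains --- and is then routine --- is feasibility and the bijection. Feasibility of $\Sigma$ follows from the $\QUADpp$ promise: given a solution $\xi$ of the system, set every variable edge to $\xi_v$, every copy edge to the value of the variable it copies, and $\Omega:=(I-\Lambda)^T\Sigma(I-\Lambda)$; the copy and equation constraints being satisfied, every $\omega_{ij}$ over a non-bidirected pair vanishes, so $\Omega\in\PD(B)$ and $\phi_G(\Lambda,\Omega)=\Sigma$, i.e.\ $\Sigma\in\im\phi_G$. For the bijection, by the previous paragraph $\Lambda\in\fib_G(\Sigma)$ iff its copy coordinates equal the corresponding variable coordinates and its variable coordinates $(\lambda_{s_1t_1},\dots,\lambda_{s_nt_n})$ form a solution of $p_1=\dots=p_m=0$, so $\Lambda\mapsto(\lambda_{s_1t_1},\dots,\lambda_{s_nt_n})$ is the desired bijection. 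The construction is computable in polynomial time, so it is a reduction from $\QUADpp$ to the complement of numerical identifiability, which, $\QUADpp$ being $\existsR$-hard, shows that numerical identifiability is $\forallR$-hard. I expect the main obstacle to be exactly the bookkeeping of the third paragraph: designing the port computations so that each gadget contributes precisely its one intended equation and no spurious one, and inserting bidirected edges in exactly the places needed so that no unwanted constraint survives while all variable edges stay free and all copies of a variable stay mutually consistent.
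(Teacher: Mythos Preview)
Your proposal is correct and follows essentially the same approach as the paper: reduce from $\QUADpp$, use the characterization of $\fib_G(\Sigma)$ as the $\Lambda\in\IR^D$ satisfying $\omega_{ij}(\Lambda)=0$ on non-bidirected pairs (this is the paper's Lemma~\ref{lem:drton_fiber_iso}), encode the Tseitin forms via gadgets whose missing bidirected edges give exactly the desired equations, make $\Sigma$ positive definite by diagonal dominance, and certify feasibility via the promised solution.

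The only differences are cosmetic choices in the gadgetry and one structural simplification the paper exploits that you leave implicit. Where you introduce a separate variable edge $s_v\to t_v$ per variable plus a fresh copy gadget per variable occurrence, the paper uses a single ``variable root'' $r$ with edges $1\to r,\dots,n\to r$ storing all variables at once; multiplication then reuses $r$ as one of the two port nodes, so only the second factor needs copying. More importantly, the paper imposes the global shape ``directed edges form a bipartite graph with every bottom node of outdegree~$1$, and bidirected edges are present between \emph{all} pairs except a few top-layer pairs.'' This yields the clean Observation~\ref{obs:sigma_unique}: each $\sigma$-entry occurring in a constraint equation occurs in exactly one such equation, so the entries can be programmed independently and there are no spurious constraints to worry about. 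Your scheme has the same bipartite-with-outdegree-$1$ skeleton, so Observation~\ref{obs:sigma_unique} applies verbatim; adopting it would eliminate the ``bookkeeping'' you flag as the main obstacle. (Your phrase ``a directed path of length two realizes a product'' is a slip --- in your bipartite structure there are no such paths; the product term is the bilinear cross-term $\lambda_{p_a,i}\lambda_{p_b,j}\sigma_{p_a,p_b}$ in $\omega_{ij}$, exactly as you use it elsewhere.)
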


The proof consists of building a polynomial-time reduction from the complement of $\QUADpp$ to numerical identifiability, i.e.,\ we construct an acyclic mixed graph $G$ and a $\Sigma \in \im \phi_G$, such that the fiber $\fib_G(\Sigma)$ has size $1$ iff the given $\QUADpp$-instance has only one solution.
For this, we use the following characterization of fibers due to \cite{drton2018algebraic}:
\begin{lemma}
    \label{lem:drton_fiber_iso}
    Let $G = (V, D, B)$ be an acyclic mixed graph, and let $\Sigma \in \im \phi_G$.
    The fiber $\fib_G(\Sigma)$ is isomorphic to the set of matrices $\Lambda \in \IR^D$ that solve the equation system
    \begin{align}
        [(I - \Lambda)^T\Sigma(I - \Lambda)]_{i,j} &= 0, && i \neq j, i \leftrightarrow j \notin B
    \end{align}
\end{lemma}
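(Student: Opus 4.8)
The plan is to exhibit the claimed isomorphism concretely as the projection $\pi\colon(\Lambda,\Omega)\mapsto\Lambda$, by showing that on a fiber the $\Omega$-part is redundant: it is uniquely (and polynomially) recoverable from $\Lambda$ and the fixed matrix $\Sigma$. So the real work is to (i) check $\pi$ is injective on $\fib_G(\Sigma)$, and (ii) identify the image of $\pi$ with the stated solution set.

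First I would set up notation. Fix $\Sigma\in\im\phi_G$, say $\Sigma=\phi_G(\Lambda_0,\Omega_0)=(I-\Lambda_0)^{-T}\Omega_0(I-\Lambda_0)^{-1}$. Since the model is recursive and the nodes are topologically sorted, every $\Lambda\in\IR^D$ is strictly upper triangular, so $I-\Lambda$ is upper unitriangular, hence invertible; likewise $I-\Lambda_0$. As $\Omega_0\in\PD(B)\subseteq\PD(n)$, the matrix $\Sigma$ is a congruence transform of a positive definite matrix by an invertible matrix, so $\Sigma$ is symmetric and positive definite. Next, for $\Lambda\in\IR^D$ and $\Omega\in\PD(B)$, conjugating Eq.~\eqref{eqn:SEM} by $(I-\Lambda)^T$ on the left and $(I-\Lambda)$ on the right shows that $\phi_G(\Lambda,\Omega)=\Sigma$ is equivalent to $\Omega=(I-\Lambda)^T\Sigma(I-\Lambda)$. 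Writing $\Omega(\Lambda):=(I-\Lambda)^T\Sigma(I-\Lambda)$, whose entries are polynomials in the entries of $\Lambda$, we see that membership of $(\Lambda,\Omega)$ in $\fib_G(\Sigma)$ forces $\Omega=\Omega(\Lambda)$; in particular $\pi$ is injective on the fiber.

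It remains to describe the image, i.e.\ for which $\Lambda\in\IR^D$ we have $\Omega(\Lambda)\in\PD(B)$. This requires two things: that $\Omega(\Lambda)$ be positive definite, and that $[\Omega(\Lambda)]_{i,j}=0$ for all $i\neq j$ with $i\leftrightarrow j\notin B$. The first is automatic, since $\Omega(\Lambda)$ is a congruence transform of the positive definite $\Sigma$ by the invertible $I-\Lambda$; symmetry is inherited from $\Sigma$ as well, so the $(i,j)$- and $(j,i)$-equations coincide and the condition is well posed. Hence the only genuine constraint is the second one, which is exactly the system $[(I-\Lambda)^T\Sigma(I-\Lambda)]_{i,j}=0$ for $i\neq j$, $i\leftrightarrow j\notin B$. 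Therefore $\pi$ restricts to a bijection from $\fib_G(\Sigma)$ onto the solution set of that system, with inverse $\Lambda\mapsto(\Lambda,\Omega(\Lambda))$. Both $\pi$ and this inverse are polynomial maps, so the bijection is in fact an isomorphism of affine varieties, which is the asserted statement.

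The only step that needs real care — and the one I would flag as the crux — is the automatic positive definiteness used to pass from $\Omega\in\PD(B)$ to just the off-diagonal vanishing conditions: this is what converts an a priori semialgebraic description of the fiber into the clean polynomial system in the lemma. The rest is bookkeeping about invertibility of unitriangular matrices, symmetry, and the congruence identity derived from Eq.~\eqref{eqn:SEM}.
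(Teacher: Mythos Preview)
Your proof is correct and is essentially the standard argument: the crux is that positive definiteness of $\Omega(\Lambda)=(I-\Lambda)^T\Sigma(I-\Lambda)$ comes for free by congruence, so the only constraints left from $\Omega(\Lambda)\in\PD(B)$ are the off-diagonal vanishing conditions. The paper does not supply its own proof of this lemma at all; it simply quotes the statement from \cite{drton2018algebraic}, so there is nothing to compare against beyond noting that your argument is precisely the one underlying that reference.
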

We construct $G$ as follows:
The directed edges form a bipartite graph with edges going from the bottom layer to the top layer.
Every node at the bottom layer has outdegree one.
Moreover bidirected edges exist between all pairs of nodes, except certain pairs of nodes of the top layer.
See Figure~\ref{fig:missing_edge_and_variable_gadget} for an illustration.

\begin{figure}[h]
    \vspace{-1.5em}
    \centering
    \begin{tikzpicture}[scale=0.65]
    \footnotesize
        \node[circle, draw, minimum size=21pt, scale=0.9] (i) at (0,0) {$i$};
        \node[circle, draw, minimum size=21pt, scale=0.9] (a1) at (-1,-2) {$a_1$};
        \node[minimum size=21pt, scale=0.9] (adots) at (0,-2) {$\dots$};
        \node[circle, draw, minimum size=21pt, scale=0.9] (an) at (1,-2) {$a_n$};
        \node[circle, draw, minimum size=21pt, scale=0.9] (j) at (4,0) {$j$};
        \node[circle, draw, minimum size=21pt, scale=0.9] (b1) at (3,-2) {$b_1$};
        \node[minimum size=21pt, scale=0.9] (bdots) at (4,-2) {$\dots$};
        \node[circle, draw, minimum size=21pt, scale=0.9] (bm) at (5,-2) {$b_m$};
        \draw[->] (a1) -- node [left] {$\lambda_{a_1,r}$} (i);
        \draw[->] (an) -- node [right] {$\lambda_{a_n,r}$} (i);
        \draw[->] (b1) -- node [left] {$\lambda_{b_1,r}$} (j);
        \draw[->] (bm) -- node [right] {$\lambda_{b_m,r}$} (j);
        \draw[color = blue] (i) -- node [above] {missing} (j);
    \end{tikzpicture}
    \hspace{6em}
    \begin{tikzpicture}[scale=0.65]
    \footnotesize
        \node[circle, draw, minimum size=21pt, scale=0.9] (r) at (0,0) {$r$};
        \node[circle, draw, minimum size=21pt, scale=0.9] (1) at (-1,-2) {$1$};
        \node[minimum size=21pt, scale=0.9] (dots) at (0,-2) {$\dots$};
        \node[circle, draw, minimum size=21pt, scale=0.9] (n) at (1,-2) {$n$};
        \draw[->] (1) -- node [left] {$\lambda_{1,r}$} (r);
        \draw[->] (n) -- node [right] {$\lambda_{n,r}$} (r);
    \end{tikzpicture}
    \caption{Left: A single missing edge on the top layer. Right: The gadget storing the value of each variable. $\lambda_{i,r}$ corresponds to $x_i$.}
    \label{fig:missing_edge_and_variable_gadget}
\end{figure}
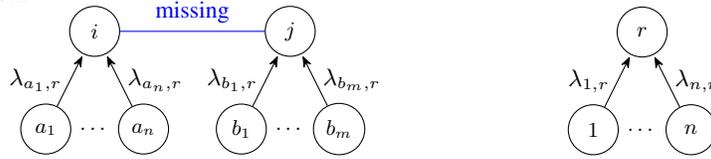

This missing edge in Figure~\ref{fig:missing_edge_and_variable_gadget} corresponds to the equation
\begin{equation} \label{eq:general_main}
  0 = \sigma_{i, j} - \sum_{\ell=1}^{n}\sigma_{a_\ell, j}\lambda_{a_\ell, i} - \sum_{k=1}^{m}\sigma_{b_k, j}\lambda_{a_\ell, i} + \sum_{\ell=1}^{n}\sum_{k=1}^{m}\sigma_{a_\ell, b_k}\lambda_{a_\ell, i}\lambda_{b_k, j}
\end{equation}
in Lemma~\ref{lem:drton_fiber_iso}.

\begin{observation}
    \label{obs:sigma_unique}
    All $\sigma$ values that appear in \eqref{eq:general_main} cannot appear in any other missing edge equation of missing edges in the top layer, since the nodes in the bottom layer only have outdegree one.
    Furthermore $\sigma_{i,j}$ can obviously only appear in this equation.
\end{observation}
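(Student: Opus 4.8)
The plan is to extract from the construction of $G$ the two structural features that make the statement essentially automatic, and then run a short case analysis. The two features are: (i) $G$ is bipartite with every directed edge pointing from the bottom layer to the top layer, so that a parent of any node is a bottom-layer node and no top-layer node is a parent of anything; and (ii) every bottom-layer node has outdegree exactly one, so that a bottom-layer node lies in the parent set $\mathrm{pa}(v)$ of at most one top-layer node $v$. In particular the sets $\mathrm{pa}(v)$, for $v$ ranging over the top layer, are pairwise disjoint subsets of the bottom layer, and $\mathrm{pa}(i)\cap\mathrm{pa}(j)=\emptyset$.

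First I would record the index pattern of the $\sigma$-entries in \eqref{eq:general_main}. Expanding $[(I-\Lambda)^{T}\Sigma(I-\Lambda)]_{i,j}$ as in Lemma~\ref{lem:drton_fiber_iso}, and using $\mathrm{pa}(i)=\{a_1,\dots,a_n\}$, $\mathrm{pa}(j)=\{b_1,\dots,b_m\}$ together with $\lambda_{i,i}=\lambda_{j,j}=0$, every entry $\sigma_{s,t}$ that occurs in the equation attached to the missing top-layer edge $\{i,j\}$ has, after possibly swapping $s$ and $t$, $s\in\{i\}\cup\mathrm{pa}(i)$ and $t\in\{j\}\cup\mathrm{pa}(j)$. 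Hence each such entry is either exactly $\sigma_{i,j}$ (two top-layer indices), or has at least one index in $\mathrm{pa}(i)\cup\mathrm{pa}(j)$, i.e.\ at least one bottom-layer index.

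Now for the main step, let $\{i',j'\}\neq\{i,j\}$ be another missing top-layer edge and suppose some entry $\sigma_{s,t}$ occurs in both corresponding equations. If $\sigma_{s,t}$ has a bottom-layer index, say $s$, then $s$ is a parent of some node of $\{i,j\}$ and also of some node of $\{i',j'\}$; by feature (ii) its unique child lies in $\{i,j\}\cap\{i',j'\}$, so after renaming we may assume $i=i'$. Applying the first step to both equations then forces $t\in(\{j\}\cup\mathrm{pa}(j))\cap(\{j'\}\cup\mathrm{pa}(j'))$, which is empty since $j\neq j'$ are distinct top-layer nodes, $\mathrm{pa}(j)\cap\mathrm{pa}(j')=\emptyset$ by feature (ii), and no top-layer node lies in any parent set; contradiction. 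The only alternative is $\{s,t\}=\{i,j\}$, but an entry with two top-layer indices can occur in the $\{i',j'\}$-equation only if $\{i,j\}=\{i',j'\}$ (excluded) or one of $i,j$ is a bottom-layer parent of $i'$ or $j'$ (impossible). Specializing this last dichotomy to the entry $\sigma_{i,j}$ itself yields the ``furthermore'' claim. The argument is pure bookkeeping; the only place requiring care is invoking outdegree one exactly when a shared entry has a bottom-layer index, together with the observation that two distinct missing edges may still share one endpoint, which is precisely the $i=i'$ case handled above.
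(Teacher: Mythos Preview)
Your argument is correct and is exactly the detailed bookkeeping underlying the paper's observation; the paper itself gives no separate proof, treating the claim as immediate from the outdegree-one property you call feature~(ii). Your careful case split (shared entry has a bottom-layer index versus both indices top-layer, with the $i=i'$ subcase handled via disjointness of the parent sets $\mathrm{pa}(j)$ and $\mathrm{pa}(j')$) makes explicit precisely what the paper leaves implicit.
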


The above observation means that we can freely ``program'' the equations.
We start with a gadget with one node $r$ in the top layer and $n$ nodes in the bottom layer connected to it.
It is used to store the value of each variable of our ETR instance, $\lambda_{1, r}$ corresponds to $x_1$, $\lambda_{2, r}$ corresponds to $x_2$, etc, see Figure~\ref{fig:missing_edge_and_variable_gadget} for an illustration.

By assuming all polynomials in our $\QUADpp$-instance are of the forms \eqref{eq:tseitin}, we need to be able to encode products and affine linear forms.
We start by showing how to encode an arbitrary affine linear constraint $\sum_{\ell=1}^{n}\alpha_\ell x_\ell = \beta$ using a single additional node $i$ in the top layer, ``connected'' to $r$ via a missing edge as in Figure~\ref{fig:addition_and_multiplication_gadget}.

\begin{figure}[h]
    \vspace{-2em}
    \centering
    \begin{tikzpicture}[scale=0.65]
    \footnotesize
        \node[circle, draw, minimum size=21pt, scale=0.9] (r) at (0,0) {$r$};
        \node[circle, draw, minimum size=21pt, scale=0.9] (i) at (3,0) {$i$};
        \node[circle, draw, minimum size=21pt, scale=0.9] (1) at (-1,-2) {$1$};
        \node[minimum size=21pt, scale=0.9] (dots) at (0,-2) {$\dots$};
        \node[circle, draw, minimum size=21pt, scale=0.9] (n) at (1,-2) {$n$};
        \draw[->] (1) -- node [left] {$\lambda_{1,r}$} (r);
        \draw[->] (n) -- node [right] {$\lambda_{n,r}$} (r);
        %\draw[->] (p) -- node [right] {$\lambda_{p,j}$} (j);
        \draw[color = blue] (r) -- node [above] {missing} (i);
    \end{tikzpicture}
    \hspace{8em}
    \begin{tikzpicture}[scale=0.65]
    \footnotesize
        \node[circle, draw, minimum size=21pt, scale=0.9] (r) at (0,0) {$r$};
        \node[circle, draw, minimum size=21pt, scale=0.9] (i) at (3,0) {$i$};
        \node[circle, draw, minimum size=21pt, scale=0.9] (j) at (6,0) {$j$};
        \node[circle, draw, minimum size=21pt, scale=0.9] (1) at (-1,-2) {$1$};
        \node[minimum size=21pt, scale=0.9] (dots) at (0,-2) {$\dots$};
        \node[circle, draw, minimum size=21pt, scale=0.9] (n) at (1,-2) {$n$};
        
        \node[circle, draw, minimum size=21pt, scale=0.9] (ip) at (3,-2) {$i'$};
        \node[circle, draw, minimum size=21pt, scale=0.9] (jp) at (6,-2) {$j'$};
        
        \draw[->] (1) -- node [left] {$\lambda_{1,r}$} (r);
        \draw[->] (n) -- node [right] {$\lambda_{n,r}$} (r);
        \draw[->] (ip) -- node [left] {$\lambda_{i',i}$} (i);
        \draw[->] (jp) -- node [left] {$\lambda_{j',j}$} (j);
        
        \draw[color = blue] (r) -- node [above] {missing} (i);
        \draw[color = blue] (i) -- node [above] {missing} (j);
        \draw[color = blue] (r) to[out=37,in=143] node [above] {missing} (j);
    \end{tikzpicture}
    \caption{Left: Gadget for affine linear constraints. Right: Gadget for multiplicative constraints}
    \label{fig:addition_and_multiplication_gadget}
\end{figure}
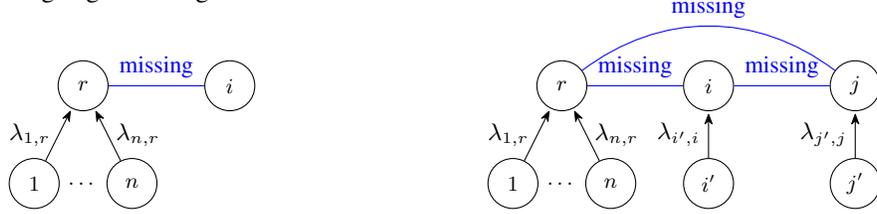

Setting $\sigma_{r, i} = \beta$ and $\sigma_{\ell,i} = \alpha_\ell$ makes $\eqref{eq:general_main}$ together with $\lambda_{\ell, r} = x_\ell$ directly equivalent to $\sum_{\ell=1}^{n}\alpha_\ell x_\ell = \beta$.

Encoding a product $x_a = x_b \cdot x_c$ requires two additional nodes $i$ and $j$ in the top layer, with missing bidirectional edges between them and $r$.
Furthermore we introduce two nodes $i'$ and $j'$ in the bottom layer, connected to $i$ and $j$ respectively, 
see Figure~\ref{fig:addition_and_multiplication_gadget}.
Setting $\sigma_{r, i} = \sigma_{1, i'} = \ldots = \sigma_{n, i'} = 0$, $\sigma_{r, i'} = -1$, $\sigma_{c, i} = 1$, and $\sigma_{\ell, i} = 0$, for all $\ell \in \{1, \ldots, n\} \setminus \{c\}$, ensures $\lambda_{i',i} = \lambda_{c, r}$.
To further ensure $\lambda_{j',j} = \lambda_{i',i} = \lambda_{c, r}$, we set $\sigma_{i, j} = \sigma_{i', j'} = 0$, $\sigma_{i', j} = 1$, and $\sigma_{i, j'} = -1$.
After having copied $\lambda_{c, r}$ twice, we are finally able to enforce the multiplication itself via $\sigma_{r, j} = \sigma_{r, j'} = 0$, $\sigma_{a, j} = 1$, $\sigma_{b, j'} = 1$, $\sigma_{\ell, j} = 0$, for all $\ell \in \{1, \ldots, n\} \setminus \{a\}$, and $\sigma_{\ell, j'} = 0$, for all $\ell \in \{1, \ldots, n\} \setminus \{b\}$.

\begin{proof}[Proof of Theorem~\ref{thm:num_ident_hardness}]
    Let polynomials $p_1, \ldots, p_m$ in variables $x_1, \ldots, x_n$ be a $\QUADpp$-instance with all polynomials being one of the forms in \eqref{eq:tseitin}.
    Let the number of affine linear polynomials among $p_1, \ldots, p_m$ be $k$.
    Then the graph $G = (V, D, B)$ constructed above has $\ell := 1+n+k+4(m-k) = 1+n+4m-3k$ nodes.
    Using Observation~\ref{obs:sigma_unique}, we see that the construction induces a well-defined partial matrix $\Sigma \in \IR^{\ell \times \ell}$.
    Every entry of $\Sigma$ not defined by the construction is set to $0$ if it is off-diagonal and $\ell$ if it is on the diagonal.
    Since all $\sigma_{ij}$ set in the construction are from $\{-1, 0, 1\}$ and off-diagonal, this makes $\Sigma$ strictly diagonally dominant and thus positive definite by the Gershgorin circle theorem \cite{gershgorin1931uber}.

    Remains to prove $\Sigma \in \im \phi_G$.
    Let $\xi \in \IR^n$ be any solution with $p_1(\xi) = \cdots = p_m(\xi) = 0$.
    The existence of $\xi$ is guaranteed by the promise of $\QUADpp$.
    Create $\Lambda \in \IR^{\ell \times \ell}$ as follows: $\lambda_{i,r} = \xi_i$ for $i \in \{1, \ldots, n\}$ and $\lambda_{i', i} = \lambda_{j', j} = \xi_c$ whenever the vertices $i, i', j, j'$ are the vertices added by the construction due to a multiplication.
    All other entries of $\Lambda$ are $0$.
    Then $I - \Lambda$ is invertible and we have $\Sigma = \phi_G(\Lambda, \Omega)$ for $\Omega = (I - \Lambda)^T\Sigma(I - \Lambda)$.
    Furthermore $\Omega$ is positive definite due to $\Sigma$ being positive definite and $\Omega \in \PD(B)$.

    By Lemma~\ref{lem:drton_fiber_iso}, this implies that $|\fib_G(\Sigma)|$ is precisely the number of solutions of our $\QUADpp$-instance and the theorem now follows from the the $\forallR$--hardness of the complement of $\QUADpp$ which is equivalent to Corollary~\ref{col:quadpp_hardness}.
\end{proof}

\section{Upper Bound for Numerical Identifiability}

%\subsection{Global Identifiability}

%In this section we show a $\forallR$ upper bound for global identification.
%This then proves a $\PSPACE$ upper bound using the classical quantifier elimination result in %\cite{renegar1992computational}.

%\medskip

%Rough formula:
%\begin{align*}
%    &\forall \Lambda_1, \Lambda_2 \in \IR^D, \Omega_1, \Omega_2 \in \PD(B): \forall A_1, A_2 \in %\IR^{m \times m}:\\
%    &(A_1 \cdot (I - \Lambda_1) = I \land A_2 \cdot (I - \Lambda_2) = I \land A_1^T\Omega_1A_1 = A_2^T\Omega_2A_2) \implies (\Lambda_1 = \Lambda_2 \land \Omega_1 = \Omega_2)
%\end{align*}

In this section, we show a $\forallR$ upper bound for numerical identifiability and thus, combined with Theorem~\ref{thm:num_ident_hardness}, prove an almost\footnote{see Remark~\ref{rem:promise} for the details.	} matching lower and upper bound.
We start with the following lemma. The $\existsR$ part will be needed in the next section.

\begin{lemma}
    \label{lem:pd_membership}
    Membership in $\PD(n)$ and $\PD(B)$ can be expressed in $\existsR$ and $\forallR$.
    \footnote{Note that membership in $\PD(n)$ can be even decided faster. However, this will not change the complexity of our overall algorithm.}
\end{lemma}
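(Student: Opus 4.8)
The plan is to express positive definiteness of a symmetric matrix $M$ both as an existential and as a universal first-order formula over the reals, and then observe that restricting to $\PD(B)$ only adds a fixed set of polynomial equations (the off-diagonal entries forbidden by $B$ are set to zero), which can be conjoined without leaving either quantifier class.

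First I would handle $\PD(n)$. Given the entries of $M = (m_{i,j})$ as real variables (or as fixed rational inputs), recall the standard characterizations. For the $\forallR$ direction, positive definiteness of a symmetric $M$ is literally a universally quantified statement: $\forall x \in \IR^n\ (x \neq 0 \to x^T M x > 0)$, where $x \neq 0$ abbreviates $\bigvee_{i} x_i \neq 0$ and $x^T M x$ is the explicit quadratic form $\sum_{i,j} m_{i,j} x_i x_j$; this is a quantifier-free Boolean combination of polynomial (in)equalities in the $x_i$ and the $m_{i,j}$, so the whole thing is a $\forallR$ sentence (one block of universal real quantifiers). For the $\existsR$ direction, I would use a certificate of positive definiteness: for instance, that $M$ admits a Cholesky-type factorization $M = L^T L$ with $L$ upper triangular and strictly positive diagonal, i.e.\ $\exists L\ \big(\bigwedge_{i>j} \ell_{i,j} = 0 \wedge \bigwedge_i \ell_{i,i} > 0 \wedge \bigwedge_{i,j} m_{i,j} = \sum_k \ell_{k,i}\ell_{k,j}\big)$. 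This existential block witnesses $M \succ 0$ and, conversely, every positive definite matrix has such a factorization, so the formula is correct. (Alternatively one can existentially quantify over all $n$ leading principal minors being positive, but the factorization certificate keeps degrees low and needs no determinant expansion.)

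Next I would deal with $\PD(B)$. By definition $\Omega \in \PD(B)$ iff $\Omega \in \PD(n)$ and $\omega_{i,j} = 0$ for every off-diagonal pair $\{i,j\}$ with $i \leftrightarrow j \notin B$. The second condition is a finite conjunction of polynomial equations $\omega_{i,j} = 0$ in the entries, which we simply append (as a conjunction) to whichever of the two formulas above we built for $\PD(n)$, applied to $M = \Omega$. Conjoining quantifier-free constraints does not introduce new quantifier alternations, so we stay in $\existsR$ for the factorization version and in $\forallR$ for the quadratic-form version. Equivalently, one can first substitute $0$ for the forbidden entries of $\Omega$ and then invoke the $\PD(n)$ formulas on the reduced parameter set.

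I do not expect a serious obstacle here; the only points to be careful about are purely formal: making sure the ``$x \neq 0$'' disjunction is written as an honest quantifier-free Boolean formula over $\{<,\le,=\}$ (using $\neg(x_i = 0)$, allowed in the signature), and making sure that when $M$'s entries are themselves variables bound by an outer quantifier (as they will be in the next section's application), the inner $\forall x$ or $\exists L$ block is the only quantifier we are adding, so that composing with an outer $\exists$ or $\forall$ block yields exactly $\exists\forall\IR$ or $\forall\exists\IR$ as claimed. No other result from the excerpt is needed beyond the definitions of $\existsR$ and $\forallR$ and of $\PD(n)$, $\PD(B)$.
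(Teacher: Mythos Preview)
Your proposal is correct and essentially identical to the paper's proof: the paper uses the Cholesky factorization $A = LL^T$ (with $L$ lower triangular and positive diagonal) for the $\existsR$ formula, the defining condition $\forall x\,(x\neq 0 \Rightarrow x^T A x > 0)$ for the $\forallR$ formula, and then conjoins the vanishing constraints $A_{i,j}=0$ for $(i,j)\notin B$, $i\neq j$, to pass from $\PD(n)$ to $\PD(B)$. The only cosmetic difference is your use of $M = L^T L$ with $L$ upper triangular versus the paper's $A = LL^T$ with $L$ lower triangular, which are equivalent formulations of the same decomposition.
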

\begin{proof}
    For the $\existsR$ expression, we use the fact that every real positive definite matrix $A \in \IR^{n \times n}$ has a Cholesky decomposition $A = L L^T$ where $L$ is a real lower triangular matrix with positive diagonal entries.
    We can thus express $A \in \PD(n)$ as
    \begin{align} \label{eq:gi:1}
        \exists L \in \IR^{n \times n}: A = L L^T \land \bigwedge_{i \in \{1, \ldots, n\}} (L_{i,i} > 0 \quad \land \bigwedge_{j \in \{i+1, \ldots, n\}} L_{i,j} = 0).
    \end{align}
    We quantify over matrices and consider matrix equations in (\ref{eq:gi:1}). But this can be easily rewritten as an $\ETR$-instance by quantifying over all entries of the matrix and having one individual equation for each entry of the matrix equation.

    For the $\forallR$ expression, we directly use the definition of positive definite matrices to express $A \in \PD(n)$ as
    \begin{align} \label{eq:gi:2}
        \forall x \in \IR^n: x \neq 0 \implies x^T A x > 0.
    \end{align}

    For membership $A \in \PD(B)$, in both $\existsR$ and $\forallR$, we add the constraint
    %\begin{align}
        $\bigwedge_{(i,j) \notin B \land i \neq j} A_{i,j} = 0$
    %\end{align}
    to (\ref{eq:gi:1}) and (\ref{eq:gi:2}), respectively.
\end{proof}

We remind the reader that numerical identifiability is a promise problem with the promise that the input $\Sigma \in \im \phi_G$, so it suffices to check whether all elements in the fiber $\fib_G(\Sigma)$ are identical.
\begin{align}
    \forall \Lambda_1, \Lambda_2 \in \IR^D, &\Omega_1, \Omega_2 \in \PD(B): \nonumber\\
    &\phi_G(\Lambda_1, \Omega_1) = \phi_G(\Lambda_2, \Omega_2) = \Sigma \Rightarrow (\Lambda_1 = \Lambda_2 \land \Omega_1 = \Omega_2). \label{eq:num_ident_forallr_formula}
\end{align}
The checks $\phi_G(\Lambda_i, \Omega_i) = \Sigma$ are implemented using $\Omega_i = (I-\Lambda_i)^T\Sigma(I-\Lambda_i)$, which is equivalent due to $I-\Lambda_i$ being invertible for any $\Lambda_i \in \IR^D$. This proves the following:

\begin{theorem}
    Numerical identifiability is in (the promise version of) $\forallR$.
\end{theorem}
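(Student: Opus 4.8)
The plan is to promote the already-displayed formula~\eqref{eq:num_ident_forallr_formula} into an honest $\forall\IR$-sentence, and to argue that its construction from the input $(G,\Sigma)$ takes polynomial time. First I would record why \eqref{eq:num_ident_forallr_formula} is the right thing to test: the input is promised to be feasible, so $\fib_G(\Sigma)\neq\emptyset$, and hence $|\fib_G(\Sigma)|=1$ is equivalent to $|\fib_G(\Sigma)|\le 1$, which is precisely the universal statement in \eqref{eq:num_ident_forallr_formula}. It remains to express every subformula over the reals without introducing existential quantifiers.

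Next I would clear the matrix inverse hidden inside $\phi_G$. Since the model is recursive and the nodes are topologically sorted, $I-\Lambda_i$ is unitriangular for every $\Lambda_i\in\IR^D$, hence invertible, so $\phi_G(\Lambda_i,\Omega_i)=\Sigma$ is equivalent to the polynomial identity $\Omega_i=(I-\Lambda_i)^T\Sigma(I-\Lambda_i)$. Substituting this into \eqref{eq:num_ident_forallr_formula} eliminates $\Omega_1,\Omega_2$ entirely: each becomes a fixed quadratic matrix expression in $\Lambda_i$ and $\Sigma$. The remaining hypothesis $\Omega_i\in\PD(B)$ then splits into its positive-definiteness part, which is automatic because $\Sigma$ is positive definite (it lies in $\im\phi_G$ by the promise) and congruence by the invertible matrix $I-\Lambda_i$ preserves positive definiteness, and its zero-pattern part $[(I-\Lambda_i)^T\Sigma(I-\Lambda_i)]_{k,l}=0$ for $k\neq l$, $k\leftrightarrow l\notin B$, which is a system of polynomial equalities. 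Together with the linear constraints $\Lambda_i\in\IR^D$ and the conclusion $\Lambda_1=\Lambda_2$ (which already forces $\Omega_1=\Omega_2$), the whole statement becomes $\forall\,\Lambda_1,\Lambda_2 \,.\, \varphi$ with $\varphi$ a quantifier-free Boolean combination of polynomial equalities, of size polynomial in $|G|$ and the bit-length of $\Sigma$ — a bona-fide $\forall\IR$-sentence, computable in polynomial time. This is exactly a polynomial-time many-one reduction of the promise problem to $\forallR$, i.e.\ membership in promise-$\forallR$; note also that this sentence is just the contrapositive-free reading of Lemma~\ref{lem:drton_fiber_iso}.

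I expect the only delicate point — and the "obstacle", such as it is — to be quantifier placement. If one prefers not to rely on the automatic positive-definiteness of $(I-\Lambda_i)^T\Sigma(I-\Lambda_i)$ and instead keeps $\Omega_1,\Omega_2$ as free universally quantified matrices with the literal hypothesis $\Omega_i\in\PD(B)$, then this membership, sitting in the \emph{antecedent} of the implication under the universal prefix, must be expanded using the \emph{existential} description from Lemma~\ref{lem:pd_membership}: via $(\exists L_i\,\chi)\Rightarrow\psi \equiv \forall L_i\,(\chi\Rightarrow\psi)$ the auxiliary Cholesky variables $L_i$ move out as universal quantifiers and are absorbed into the prefix, whereas using the $\forallR$-description of $\PD(B)$ there would spawn existential quantifiers and break out of $\forallR$. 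Finally I would point out, as in Remark~\ref{rem:promise}, that the promise is essential: without it one would additionally have to verify $\Sigma\in\im\phi_G$, which is an $\existsR$ condition, pushing the problem up a level of the $\exists\forall\IR$-hierarchy rather than staying in $\forallR$.
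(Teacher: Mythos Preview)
Your proposal is correct and follows essentially the same route as the paper: write the universal fiber-uniqueness formula~\eqref{eq:num_ident_forallr_formula} and eliminate the inverse in $\phi_G$ via $\Omega_i=(I-\Lambda_i)^T\Sigma(I-\Lambda_i)$, using invertibility of $I-\Lambda_i$ for $\Lambda_i\in\IR^D$. Your treatment is in fact more careful than the paper's about how the hypothesis $\Omega_i\in\PD(B)$ stays inside $\forallR$ --- the paper glosses over this, while you correctly observe that positive-definiteness is automatic under the promise (so only the quantifier-free zero-pattern survives) and that, alternatively, the Cholesky $\existsR$-description from Lemma~\ref{lem:pd_membership}, sitting in the antecedent, would prenex out as a universal quantifier.
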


\begin{remark}
    \label{rem:promise}
    Strictly speaking, numerical identifiability is not contained in $\forallR$, since it is a promise problem, that is, the outcome is not specified for $\Sigma$ that are not feasible. $\forallR$ consists by definition only of classical decision problems, where the outcome is specified for \emph{all} inputs. So the corresponding 
    complexity class is $\mathrm{Promise}\text{-}\forallR$. Section~\ref{sec:promise} contains some more information on promise problems for the reader's convenience.
\end{remark}

However, we can express feasibility in $\existsR$:
\begin{lemma} \label{lem:feas}
    \label{lem:im_phi_G_membership}
    Membership in $\im \phi_G$ can be expressed in $\existsR$.
\end{lemma}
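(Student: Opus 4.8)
The plan is to express membership of $\Sigma$ in $\im \phi_G$ directly via the parametrization map, exactly mirroring the structure of the $\forallR$ formula in \eqref{eq:num_ident_forallr_formula} but with existential quantifiers. First I would write
\begin{align}
    \Sigma \in \im \phi_G \iff \exists \Lambda \in \IR^D, \exists \Omega \in \PD(B): \phi_G(\Lambda, \Omega) = \Sigma. \nonumber
\end{align}
The quantifier $\exists \Lambda \in \IR^D$ is implemented by quantifying over the finitely many entries $\lambda_{i,j}$ with $i \to j \in D$ (all other entries of $\Lambda$ being the constant $0$), and membership $\Omega \in \PD(B)$ is handled by Lemma~\ref{lem:pd_membership}, which already gives an $\existsR$ expression for it. So the remaining task is to encode the matrix equation $\phi_G(\Lambda,\Omega) = \Sigma$, i.e.\ $(I-\Lambda)^{-T}\Omega(I-\Lambda)^{-1} = \Sigma$, as a conjunction of polynomial equations in the entries.

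The key observation, already used implicitly in the proof of Theorem~\ref{thm:num_ident_hardness}, is that one should avoid the matrix inverse and instead rewrite the defining equation in the equivalent polynomial form
\begin{align}
    \Omega = (I - \Lambda)^T \Sigma (I - \Lambda). \nonumber
\end{align}
This is legitimate because $I - \Lambda$ is always invertible for $\Lambda \in \IR^D$ (the graph is acyclic, so $I - \Lambda$ is upper triangular with ones on the diagonal after topological sorting), hence the two equations have exactly the same solution set. Both sides are now polynomial (in fact quadratic) in the entries of $\Lambda$, and one individual scalar equation per matrix entry turns this into a finite conjunction of polynomial equalities. Combined with the $\existsR$ encoding of $\Omega \in \PD(B)$ from Lemma~\ref{lem:pd_membership}, the whole statement becomes an $\ETR$-instance: existential real quantifiers over the entries of $\Lambda$, the entries of $\Omega$, and the auxiliary Cholesky matrix $L$, followed by a quantifier-free conjunction of polynomial (in)equalities.

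Since the formula size is polynomial in the size of $G$ and the bit-length of $\Sigma$, and $\existsR$ is closed under polynomial-time many-one reductions, this proves membership in $\existsR$. I do not expect a genuine obstacle here; the only thing to be careful about is making explicit that replacing $(I-\Lambda)^{-T}\Omega(I-\Lambda)^{-1} = \Sigma$ by $\Omega = (I-\Lambda)^T\Sigma(I-\Lambda)$ preserves the solution set (invertibility of $I - \Lambda$), and that quantifying over ``$\Lambda \in \IR^D$'' and ``$\Omega \in \PD(B)$'' is shorthand for quantifying over the relevant scalar entries with the structural zero constraints built in, the positive-definiteness part being supplied by Lemma~\ref{lem:pd_membership}.
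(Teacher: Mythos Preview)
Your proposal is correct and follows essentially the same approach as the paper: the paper also writes $\exists \Lambda \in \IR^D,\ \Omega \in \PD(B): (I-\Lambda)^T\Sigma(I-\Lambda)=\Omega$, invoking Lemma~\ref{lem:pd_membership} for the $\PD(B)$ part. Your write-up is in fact more detailed, making explicit the invertibility of $I-\Lambda$ (acyclicity) that justifies the rewriting and the polynomial size of the resulting formula.
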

\begin{proof}
    We use the expression
    %\begin{align*}
        $\exists \Lambda \in \IR^D, \Omega \in \PD(B): (I - \Lambda)^T \Sigma (I - \Lambda) = \Omega$,
    %\end{align*}
    where we use Lemma~\ref{lem:pd_membership} to express $\Omega \in \PD(B)$ in $\existsR$, that is, we quantify over an arbitrary matrix $\Omega$ first and add the ETR expression from 
    Lemma~\ref{lem:pd_membership} to ensure that $\Omega$ is in $\PD(B)$.
\end{proof}
So we can check in $\existsR$ whether the input $\Sigma$ is feasible and then in $\forallR$ whether the fiber has only one element. Using Renegar's algorithm, we get:

\begin{corollary}
    \label{cor:num_ident_pspace}
    Numerical identifiability can be decided in polynomial space.
\end{corollary}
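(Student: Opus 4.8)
The plan is to combine the two characterizations already in hand. By Lemma~\ref{lem:feas}, feasibility of the input, $\Sigma\in\im\phi_G$, is expressible in $\existsR$; and, under the promise that $\Sigma$ is feasible, uniqueness of the fiber is expressible in $\forallR$ through the sentence~\eqref{eq:num_ident_forallr_formula}. Renegar's quantifier-elimination algorithm~\cite{renegar1992computational} decides the truth of any first-order sentence over $(\IR;+,\cdot,<,\le,=,0,1)$ with a bounded number of quantifier alternations in polynomial space. Hence each of the two sentences can be evaluated in $\PSPACE$, and a polynomial-space machine simply runs both in sequence: on input $(G,\Sigma)$ it first asks Renegar's procedure whether the $\existsR$-sentence of Lemma~\ref{lem:im_phi_G_membership} holds; if it does not, the input violates the promise and the algorithm may output anything; if it does, the algorithm asks Renegar's procedure whether~\eqref{eq:num_ident_forallr_formula} holds and accepts iff it does, i.e.\ iff $|\fib_G(\Sigma)|=1$. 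Since $\PSPACE$ is closed under sequential composition, the whole procedure runs in polynomial space.

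Two routine points are needed to make each query a legitimate bounded-alternation real sentence of size polynomial in $(G,\Sigma)$. First, $\phi_G$ involves the inverse $(I-\Lambda)^{-1}$, which is not a polynomial operation; as observed after~\eqref{eq:num_ident_forallr_formula}, one replaces each constraint $\phi_G(\Lambda_i,\Omega_i)=\Sigma$ by the polynomial identity $\Omega_i=(I-\Lambda_i)^T\Sigma(I-\Lambda_i)$, which is equivalent because $I-\Lambda_i$ is invertible for every $\Lambda_i\in\IR^D$ (the directed part of $G$ is acyclic, so $I-\Lambda_i$ is triangular with unit diagonal). The matrix quantifiers are then flattened into quantifiers over the individual real entries, with one scalar equation per entry of each matrix equation, and similarly for $\Lambda_i\in\IR^D$ one simply omits the forbidden entries from the quantifier prefix. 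Second, the membership conditions $\Omega_i\in\PD(B)$ in~\eqref{eq:num_ident_forallr_formula} are themselves given by a quantified real formula (Lemma~\ref{lem:pd_membership}); using the $\forallR$ form there keeps the sentence purely universal, and alternatively one can dispose of the $\Omega_i$ entirely via Lemma~\ref{lem:drton_fiber_iso}, so that uniqueness of the fiber becomes the statement that any two $\Lambda_1,\Lambda_2\in\IR^D$ satisfying the Drton equations are equal. In every variant the number of quantifier blocks is an absolute constant, so Renegar's bound applies.

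There is no genuinely hard step: the corollary is a bookkeeping combination of Lemmas~\ref{lem:feas}, \ref{lem:pd_membership} and~\ref{lem:drton_fiber_iso}, the sentence~\eqref{eq:num_ident_forallr_formula}, Renegar's theorem, and closure of $\PSPACE$ under sequential composition. The only things that call for a little care are exactly the bookkeeping above --- checking that, after removing matrix inverses and expanding matrix quantifiers, one obtains an honest sentence over the reals of constant alternation depth and polynomial size --- together with the observation that, since numerical identifiability is a promise problem, the algorithm is free to answer arbitrarily on infeasible $\Sigma$, so no further work is needed on such inputs. I expect this last point to be the only place where one must be slightly explicit, to reconcile ``promise problem'' with the unconditional $\PSPACE$ bound claimed in the statement.
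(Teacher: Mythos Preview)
Your proposal is correct and follows exactly the paper's approach: check feasibility via the $\existsR$ sentence of Lemma~\ref{lem:feas}, then uniqueness via the $\forallR$ sentence~\eqref{eq:num_ident_forallr_formula}, and apply Renegar's algorithm to each. The paper states this in one sentence; your additional bookkeeping (flattening matrix quantifiers, handling $\PD(B)$ via the $\forallR$ form of Lemma~\ref{lem:pd_membership}, and the promise-problem remark) is all sound and merely makes explicit what the paper leaves implicit.
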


\section{Generic Identifiability is in PSPACE}
\label{sec:gen_ident_in_pspace}
Let $\DIM$ denote the following problem: Given an encoding of a semi-algebraic set $S$
and a number $d$, decide whether $\dim S \ge d$. 

\begin{lemma}[\cite{DBLP:journals/jc/Koiran99}\footnote{see Appendix~\ref{sec:koiran} for why this statement follows from \cite{DBLP:journals/jc/Koiran99}.}]
The problem $\DIM$ is $\existsR$-complete. Moreover, this is even true when the set is given by an existentially quantified formula as in (\ref{eq:etr:1}).    
\end{lemma}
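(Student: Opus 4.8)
The hardness direction is immediate and I would handle it first. Taking $d = 0$ and using the convention $\dim\emptyset = -\infty$, the predicate ``$\dim S \ge 0$'' is just ``$S \neq \emptyset$''. Hence the map sending a polynomial system $p_1,\dots,p_m$ to the instance $\bigl(S = \{p_1 = \dots = p_m = 0\},\ d = 0\bigr)$ is a polynomial-time many-one reduction from the feasibility problem~\eqref{eq:Poly}, which is $\existsR$-complete; the same construction run from a general sentence as in~\eqref{eq:etr:1} reduces $\ETR$ to $\DIM$ on instances where $S$ is described by an existentially quantified formula. So $\DIM$ is already $\existsR$-hard for $d = 0$, both for quantifier-free and for existentially quantified descriptions of $S$, and it remains only to prove the matching upper bound.

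For membership in $\existsR$, the plan is to derive it from Koiran's theorem~\cite{DBLP:journals/jc/Koiran99}, which puts the real dimension problem for algebraic varieties in $\mathsf{NP}_\IR$. Two things need to be checked. First, that the statement really lands in $\existsR$ and not merely in the Blum--Shub--Smale class $\mathsf{NP}_\IR$: one verifies that Koiran's nondeterministic procedure guesses only polynomially many real numbers and then performs a polynomial-size arithmetic verification on the given integer coefficients, so that its accepting computations are exactly the real solutions of a polynomial-size $\ETR$-instance. Second, that the argument covers sets presented by an existential formula and not only varieties. For that I would first reduce a quantifier-free Boolean combination of polynomial (in)equalities to a variety by the usual dimension-preserving gadgets --- $g > 0 \rightsquigarrow z^2 g - 1 = 0$ with a fresh variable $z$, $g \ge 0 \rightsquigarrow g = w^2$, negations pushed inward, and a disjunction cleared disjunct by disjunct and merged through $Z(f)\cup Z(g) = Z(fg)$ --- obtaining in polynomial time a variety $V\subseteq\IR^N$ and a set $F\subseteq[N]$ of ``original'' coordinates with $\pi_F(V) = S$; if the input already carries existential quantifiers, those quantifiers simply range over coordinates outside $F$, and one still has $\pi_F(V) = S$. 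Using the classical fact that $\dim T = \max\{\,|I| : \pi_I(T)\text{ has nonempty interior}\,\}$ for semialgebraic $T$, the question ``$\dim S \ge d$'' becomes ``some coordinate projection $\pi_I(V)$ with $I\subseteq F$ and $|I| = d$ has nonempty interior in $\IR^d$''; the choice of $I$ is an $\NP$-style guess that fits into an $\ETR$-instance via Boolean selector variables ($b_i^2 = b_i$, $\sum_i b_i = d$), and the full-dimensionality test for the selected projection is then supplied by Koiran's argument applied to that projection of $V$. Packing the guess of $I$, the quantified $y$-, $z$-, $w$-coordinates, and Koiran's witness into one outer existential block gives $\DIM \in \existsR$ for both input models, and with the hardness above this is $\existsR$-completeness.

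The step I expect to be the main obstacle is exactly this faithful transfer of Koiran's result into the $\existsR$ setting: one has to check that his certificate is a bounded tuple of reals and his verifier a polynomial-size algebraic formula over the input data, and --- the more delicate point --- that his argument still goes through when applied to a projection $\pi_I(V)$ of a variety (equivalently, to a set given by an existential formula) rather than to a variety itself, since eliminating the quantifiers first would be doubly exponential and thus not affordable. This is what makes the ``moreover'' clause nontrivial; it is sound because the extra quantified coordinates, the selector variables, and Koiran's own witness can all be absorbed into a single existential quantifier block, keeping the whole reduction inside $\ETR$.
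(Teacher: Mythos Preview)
Your plan is considerably more elaborate than what the paper actually does, and it leaves open precisely the step you flag as the main obstacle. The paper's justification (in the appendix the footnote points to) is a two-line citation argument, not a reconstruction of Koiran's method: Koiran's Theorem~6 already establishes, in the bit model, the polynomial-time equivalence of $\DIM$ and $\FFEAS$; since $\FFEAS$ is known to be $\existsR$-complete (Schaefer--\v{S}tefankovi\v{c}), $\DIM$ is $\existsR$-complete as well. For the ``moreover'' clause, the paper simply points out that Koiran himself, in Section~1.1 of his paper, explicitly lists existentially quantified formulas among the admissible representations of the input set, so nothing further needs to be proved.

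By contrast, your route---translate the $\mathsf{NP}_\IR$ certificate into an $\ETR$ formula by hand, convert the quantifier-free part to a variety via fresh variables, then argue that Koiran's full-dimensionality test survives under projection---rebuilds the machinery from scratch and, as you yourself note, hinges on verifying that Koiran's argument applies to $\pi_I(V)$ rather than to a variety. That verification is not supplied in your proposal, so strictly speaking there is a gap. The gap is avoidable: the cleaner fix is exactly what the paper does, namely use Koiran's bit-model equivalence $\DIM \equiv \FFEAS$ directly, which already covers the existentially quantified input format and sidesteps any need to analyse projections of varieties yourself. Your hardness argument via $d=0$ is fine and matches the obvious direction, but it is subsumed by the two-way equivalence Koiran proves.
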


We use the same notation as in Section~\ref{sec:ident}. Let $G = (V,D,B)$ be a mixed graph.
Let $S_G = \{ (\Lambda, \Omega) \mid |\fib_G(\Lambda, \Omega)| > 1, \Lambda \in \IR^D, \Omega \in \PD(B)  \}$.

\begin{observation} \label{obs:key}
$G$ is generically identifiable iff $\dim \IR^D \cdot \dim \PD(B) > \dim S_G$.
\end{observation}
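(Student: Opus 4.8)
The plan is to reduce the claim to a statement about Zariski closures. Write $\Theta := \IR^D \times \PD(B)$ for the parameter space, so that the left‑hand side of the Observation is the dimension $\dim \Theta = \dim \IR^D + \dim \PD(B)$ of $\Theta$, and let $\mathrm{Sym}(B)$ denote the linear space of symmetric $n \times n$ matrices whose $(i,j)$‑entry vanishes whenever $i \ne j$ and $i \leftrightarrow j \notin B$. Put $\mathcal L := \IR^D \times \mathrm{Sym}(B)$, which is a linear, hence irreducible, affine variety. Since $\PD(B)$ is a nonempty Euclidean‑open subset of $\mathrm{Sym}(B)$ (it contains $I$), the set $\Theta$ is Zariski dense in $\mathcal L$ and $\dim \Theta = \dim \mathcal L$. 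Recalling from Section~\ref{sec:ident} that $G$ is generically identifiable exactly when $|\fib_G(\Lambda_0,\Omega_0)| = 1$ for all $(\Lambda_0,\Omega_0)$ outside a proper algebraic subset of $\Theta$, the Observation is equivalent to the assertion: $S_G$ is contained in a proper algebraic subset of $\Theta$ $\iff$ $\dim S_G < \dim \Theta$.

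The first thing I would check is that $\dim S_G$ behaves like the dimension of an algebraic set, i.e.\ that $S_G$ is semialgebraic and $\dim S_G = \dim \overline{S_G}$, where $\overline{\,\cdot\,}$ denotes Zariski closure in $\mathcal L$. Because the nodes are topologically sorted, every $\Lambda \in \IR^D$ is strictly upper triangular, so $I - \Lambda$ is unipotent and $(I-\Lambda)^{-1} = \sum_{k=0}^{n-1}\Lambda^{k}$ is a polynomial in the entries of $\Lambda$; hence $\phi_G$ is a polynomial map. Consequently $S_G = \{(\Lambda,\Omega)\in\Theta : \exists (\Lambda',\Omega')\in\Theta,\ (\Lambda',\Omega')\ne(\Lambda,\Omega),\ \phi_G(\Lambda,\Omega)=\phi_G(\Lambda',\Omega')\}$ is defined by a first‑order formula over the reals (membership in $\PD(B)$ being semialgebraic, e.g.\ by Lemma~\ref{lem:pd_membership}), so $S_G$ is semialgebraic by Tarski--Seidenberg, and $\dim S_G = \dim \overline{S_G}$ by the standard fact that a semialgebraic set and its Zariski closure have the same dimension.

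The two directions are then immediate. If $G$ is generically identifiable, then $S_G \subseteq \Theta \cap W$ for some Zariski‑closed $W \subseteq \mathcal L$ with $\Theta \not\subseteq W$; since $\Theta$ is Zariski dense in the irreducible variety $\mathcal L$, this forces $W \subsetneq \mathcal L$, hence $\dim W < \dim \mathcal L$, and therefore $\dim S_G \le \dim W < \dim \mathcal L = \dim \Theta$. Conversely, if $\dim S_G < \dim \Theta$, set $W := \overline{S_G}$; then $\dim W = \dim S_G < \dim \mathcal L$, so $W$ is a proper Zariski‑closed subset of the irreducible variety $\mathcal L$, and $\Theta \cap W$ is a proper algebraic subset of $\Theta$ (its dimension is at most $\dim W < \dim \Theta$) containing $S_G$ — which is exactly the statement that $G$ is generically identifiable.

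The only genuinely delicate point is the bookkeeping in the last paragraph: one must match ``proper algebraic subset of $\Theta$'' with ``Zariski‑closed subset of $\mathcal L$ of dimension $< \dim \mathcal L$'', and this matching relies on $\mathcal L$ being irreducible and $\Theta$ being Zariski dense in it, together with $\dim S_G = \dim \overline{S_G}$ for the semialgebraic set $S_G$. Everything else is routine, and in particular the identity $\dim \Theta = \dim \IR^D + \dim \PD(B)$ follows since $\PD(B)$ is full‑dimensional in $\mathrm{Sym}(B)$.
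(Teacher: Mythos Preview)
Your proof is correct and follows the same underlying idea as the paper's one-line argument, which simply asserts that the inequality is ``just the definition of being generically identifiable''; you supply the details the paper omits (semialgebraicity of $S_G$, Zariski density of $\Theta$ in the irreducible ambient linear space $\mathcal L$, and the equivalence between containment in a proper algebraic subset of $\Theta$ and a strict dimension drop). You also correctly read the left-hand side as $\dim\IR^D + \dim\PD(B)$ --- the paper writes $\dim\IR^D \cdot \dim\PD(B)$, which appears to be a slip, since the dimension of a Cartesian product is the sum of the factor dimensions.
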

\begin{proof}
As $S_G \subseteq \IR^D \times \PD(B)$ and $\dim (\IR^D \times \PD(B)) = \dim \IR^D \cdot \dim \PD(B)$, the righthand side is just the definition of being generically identifiable.
%As $S_G \subseteq \Theta$ and $\dim \Theta = \dim \IR^D \cdot \dim \PD(B)$, the righthand side is just the definition of being generically identifiable.
\end{proof}

We postpone the proof that membership in $S_G$ can be expressed in $\existsR$, in favor of first giving our algorithm to decide generic identifiability, using this observation:
\begin{theorem}\label{thm:generic:identification}
Generic identifiability is both in $\forall\exists\IR$ and $\exists\forall\IR$.
\end{theorem}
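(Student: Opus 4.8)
The plan is to turn Observation~\ref{obs:key} into a pair of quantified formulas over the reals, one putting generic identifiability into $\forall\exists\IR$ and a dual one putting it into $\exists\forall\IR$. The quantities $\dim \IR^D$ and $\dim \PD(B)$ are trivially computable combinatorial numbers (the number of directed edges and the number of bidirected edges plus $n$, respectively), so the content is entirely in bounding $\dim S_G$. By the $\DIM$ lemma quoted just above, $\dim S_G \ge d$ is expressible by an existentially quantified formula over the reals \emph{provided} $S_G$ itself is given by an existentially quantified formula. So the first genuine step is to write membership in $S_G$ in $\existsR$: a pair $(\Lambda,\Omega) \in \IR^D \times \PD(B)$ lies in $S_G$ iff there exists a second pair $(\Lambda',\Omega')$ with $\Lambda' \in \IR^D$, $\Omega' \in \PD(B)$, $\phi_G(\Lambda,\Omega) = \phi_G(\Lambda',\Omega')$, and $(\Lambda',\Omega') \neq (\Lambda,\Omega)$. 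The equality of images is a polynomial condition once one clears the matrix inverses — equivalently, using Lemma~\ref{lem:drton_fiber_iso}, one can phrase it as $[(I-\Lambda')^T \Sigma (I-\Lambda')]_{i,j}=0$ for the relevant pairs, where $\Sigma = \phi_G(\Lambda,\Omega)$ is itself definable from $\Lambda,\Omega$; the disequality $(\Lambda',\Omega') \neq (\Lambda,\Omega)$ is a disjunction of strict-inequality/disequality atoms, which $\ETR$ permits, and positive-definiteness of $\Omega'$ (hence $\Omega' \in \PD(B)$) is expressible in $\existsR$ by Lemma~\ref{lem:pd_membership}. This gives $S_G$ as a projection of a semialgebraic set defined without quantifiers, i.e.\ an $\existsR$ description, as needed.

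Given that, the $\forall\exists\IR$ membership is immediate: $G$ is generically identifiable iff $\dim S_G < \dim\IR^D\cdot\dim\PD(B)$, i.e.\ iff $\dim S_G \ge \dim\IR^D\cdot\dim\PD(B)$ is \emph{false}. Since "$\dim S_G \ge d$" is an $\existsR$-statement (plugging the fixed number $d = \dim\IR^D\cdot\dim\PD(B)$ into the $\DIM$ formula), its negation is a $\forallR$-statement; but the $\DIM$ formula is $\exists$ over reals with $S_G$ itself given by an $\exists$-formula, and when we negate we obtain a $\forall\exists$ prefix over the reals. Hence generic identifiability $\in \forall\exists\IR$. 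For the $\exists\forall\IR$ direction one argues on the complement: non-generic-identifiability means $\dim S_G \ge d$, which by the $\DIM$ lemma is an $\exists$-statement whose matrix already contains an inner $\exists$ (the definition of $S_G$); collapsing the two adjacent existential blocks, non-identifiability is in $\existsR$, so identifiability is in $\forallR \subseteq \exists\forall\IR$. Actually, to get the claimed $\exists\forall\IR$ one can be slightly more careful and keep the structure: the dimension lemma can equally be read as giving a $\forall$-dual characterization of $\dim S_G < d$ (there is no witness of dimension $d$), and combining the outer structure appropriately yields the $\exists\forall\IR$ prefix; either way the statement follows, and one then invokes the closure of these classes under the trivial computation of $d$.

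Concretely the steps in order are: (1) state that $\dim\IR^D$ and $\dim\PD(B)$ are polynomial-time computable from $G$; (2) prove the postponed claim that $(\Lambda,\Omega)\in S_G$ is $\existsR$-expressible, spelling out the inner existential over a second parameter pair, the polynomial translation of $\phi_G(\Lambda,\Omega)=\phi_G(\Lambda',\Omega')$ after clearing $(I-\Lambda)^{-1}$ and $(I-\Lambda')^{-1}$ (both invertible since $\Lambda,\Lambda'$ are strictly upper triangular), the $\existsR$ encoding of $\Omega,\Omega'\in\PD(B)$ via Lemma~\ref{lem:pd_membership}, and the disequality; (3) invoke the $\DIM$ lemma with this $\existsR$-presented $S_G$ and the constant $d=\dim\IR^D\cdot\dim\PD(B)$ to get that "$\dim S_G\ge d$" is an $\existsR$-sentence with an inner $\exists$ block; (4) apply Observation~\ref{obs:key} and negate/dualize to read off membership in both $\forall\exists\IR$ and $\exists\forall\IR$, using that adjacent like-quantifier blocks collapse. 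The main obstacle I expect is step (2): one must be careful that the condition "$\Sigma:=\phi_G(\Lambda,\Omega)$ has a fiber element different from $(\Lambda,\Omega)$" is written purely existentially without smuggling in a universal quantifier — the subtlety is that $\Sigma$ is not an input here but a derived quantity, so one should either carry $\Sigma$ as additional quantified variables constrained by $\Sigma = (I-\Lambda)^{-T}\Omega(I-\Lambda)^{-1}$ (polynomialized), or substitute directly; and one must make sure the disequality $(\Lambda',\Omega')\ne(\Lambda,\Omega)$ and the $\PD$ constraints together still yield a clean $\exists$-formula so that Koiran's $\DIM$ result applies verbatim. Once that bookkeeping is done, the dimension comparison and the quantifier-prefix accounting are routine.
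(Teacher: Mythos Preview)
Your core idea is sound and in fact simpler and stronger than the paper's argument, but your quantifier bookkeeping is muddled in a way that obscures this.

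You compute the target dimension $d$ directly from $G$ (indeed $\dim\IR^D = |D|$ and $\dim\PD(B) = n + |B|$ are combinatorial) and then only need to decide whether $\dim S_G < d$. By the $\DIM$ lemma, the sentence ``$\dim S_G \ge d$'' lies in $\existsR$ \emph{even when} $S_G$ is presented by an existential formula; Koiran's reduction already absorbs the inner existential block coming from the description of $S_G$ and outputs a single $\existsR$-sentence. Negating that sentence therefore gives a $\forallR$-sentence, not a $\forall\exists$-sentence. You actually state this correctly (``its negation is a $\forallR$-statement'') and then immediately contradict yourself (``when we negate we obtain a $\forall\exists$ prefix''); the second claim is simply wrong. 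Once you drop it, your argument places generic identifiability in $\forallR$, which trivially implies both $\forall\exists\IR$ and $\exists\forall\IR$ and is strictly stronger than the theorem as stated. Your subsequent hedged paragraph about recovering $\exists\forall\IR$ separately is then unnecessary, and the sentence ``non-identifiability is in $\existsR$, so identifiability is in $\forallR \subseteq \exists\forall\IR$'' is already the whole story.

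The paper proceeds differently: it does \emph{not} compute $\dim\IR^D$ and $\dim\PD(B)$ directly but instead runs Koiran's algorithm on all three sets and assembles the comparison as a polynomial-size disjunction $\bigvee_{d_1,d_2}\bigl(\dim\IR^D \ge d_1 \ \land\ \dim\PD(B) \ge d_2 \ \land\ \lnot(\dim S_G \ge d_1 d_2)\bigr)$. This introduces genuine existential clauses alongside the universal one; because the two blocks are independent, they can be ordered either way, yielding the stated $\exists\forall\IR$ and $\forall\exists\IR$ bounds. The paper even footnotes that the first two dimensions could be computed directly, but does not draw the consequence that doing so sharpens the upper bound to $\forallR$ --- indeed this is posed as an open question in the paper's conclusion. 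So your shortcut is a genuine improvement; you just need to fix the quantifier accounting to see it.
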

\begin{proof}
Let $G$ be the given mixed graph.
Formulate membership in $\IR^D, \PD(B)$, and $S_G$ as instances of $\ETR$ using Lemmas~\ref{lem:pd_membership} and~\ref{lem:S_G_membership}.
Note that the number of variables and the sizes of these instances are polynomial in the size of $G$.

We first assume that we have oracle access to $\ETR$, that is, we can query $\ETR$ a polynomial number of times.
We decide whether $G$ is generically identifiable as follows:
\begin{enumerate}
   \item Use Koiran's algorithm repeatedly to compute $\dim S_G$ by checking whether $\dim S_G \ge d$ for $d = 0,\dots,m^4$. (We could even use binary search.)
   \item Compute $\dim \IR^D$ and $\dim \PD(B)$ in the same way.\footnote{While we could compute these dimensions more directly, this is not necessary for our specific algorithm. However for implementing this algorithm in practice, we would advise computing them more efficiently}
   Here it suffices to check up to the maximum possible dimension of $d = m^2$.
   \item Accept if $\dim \IR^D \cdot \dim \PD(B) > \dim S_G$, and reject otherwise.
\end{enumerate}
The algorithm is correct by Observation~\ref{obs:key}.
The algorithm above would already show the $\PSPACE$ upper bound for generic identifiability.

However, we can implement the algorithm above by a single formula by replacing the repeated use of Koiran's algorithm by a big disjunction:
\[
    \bigvee_{d_1=0}^{m^2}\bigvee_{d_2=0}^{m^2} (\dim \IR^D \geq d_1 \land \dim \PD(B) \geq d_2 \land \dim S_G < d_1d_2)\,.
\]

Note that the check $\dim S_G < d_1d_2$ needs to be implemented as $\lnot (\dim S_G \geq d_1d_2)$, thus being in $\forallR$ by De Morgan's laws.
The existential and universal quantifiers are however independent, giving upper bounds of both $\forall\exists\IR$ and $\exists\forall\IR$.
\end{proof}
Using Renegar's algorithm this implies:
\begin{corollary} \label{cor:gen_ident:pspace}
    Generic identifiability can be decided in $\PSPACE$.   
\end{corollary}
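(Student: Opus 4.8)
The plan is to chain together the structural result just established with Renegar's quantifier-elimination bound, so the proof is essentially a bookkeeping step. First I would invoke Theorem~\ref{thm:generic:identification}, which already places generic identifiability in $\exists\forall\IR$ (equivalently $\forall\exists\IR$). Unwinding the definition of that class, this means there is a polynomial-time many-one reduction mapping a mixed graph $G$ to a sentence $\Phi_G$ over the reals of the shape $\exists x_1\cdots\exists x_p\ \forall y_1\cdots\forall y_q\ \varphi(x,y)$ with $\varphi$ quantifier-free, such that $\Phi_G$ is true iff $G$ is generically identifiable. Concretely, $\Phi_G$ is the big disjunction from the proof of Theorem~\ref{thm:generic:identification}, assembled from the $\ETR$-encodings of membership in $\IR^D$, $\PD(B)$, and $S_G$ supplied by Lemmas~\ref{lem:pd_membership} and~\ref{lem:S_G_membership}; since there are only $O(m^4)$ disjuncts and each encoding has size polynomial in $|G|$, the sentence $\Phi_G$ has polynomial size and is computable in polynomial time.

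Next I would apply Renegar's algorithm~\cite{renegar1992computational}: the truth of a first-order sentence over the reals with a constant number of quantifier alternations — here a single alternation, i.e.\ two quantifier blocks — is decidable in space polynomial in the bit-size of the sentence. In fact Renegar's bounds are polynomial in the number of variables, the number and degrees of the polynomials, and the bit-length of the coefficients, and the explicit construction of $\Phi_G$ makes all of these polynomial in $|G|$. Hence $\Phi_G$, and therefore generic identifiability, can be decided in $\PSPACE$. Equivalently, this is just the closure of $\PSPACE$ under polynomial-time many-one reductions together with the inclusion $\exists\forall\IR \subseteq \PSPACE$, which is itself a consequence of Renegar's theorem.

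I do not expect any real obstacle here: all the mathematical content has been carried out in Theorem~\ref{thm:generic:identification} (modulo Lemma~\ref{lem:S_G_membership}, whose proof is postponed in the paper) and in the classical results cited. The only mild point to be careful about is to check that the parameters of $\Phi_G$ — the variable count, the polynomial degrees, and the coefficient sizes arising from the Cholesky/positive-definiteness encodings and the bounds $d_1,d_2 \le m^2$ — are genuinely polynomial in $|G|$, so that Renegar's space bound specializes to a polynomial bound; this is immediate from inspecting the construction.
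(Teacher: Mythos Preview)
Your proposal is correct and matches the paper's own argument: the corollary is obtained directly from Theorem~\ref{thm:generic:identification} by invoking Renegar's $\PSPACE$ decision procedure for first-order sentences over the reals with a bounded number of quantifier alternations. The additional checks you spell out (polynomial size of $\Phi_G$, bounded degrees and coefficient sizes) are exactly the implicit bookkeeping the paper leaves to the reader.
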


Only remains to show how to express membership in $S_G$ as an $\ETR$-formula.

\begin{lemma}\label{lem:S_G_membership}
    Membership in $S_G$ can be expressed in $\existsR$.
\end{lemma}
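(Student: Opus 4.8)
The plan is to encode membership in $S_G$ directly from the definition
\[
   S_G = \{ (\Lambda, \Omega) \mid |\fib_G(\Lambda,\Omega)| > 1,\ \Lambda \in \IR^D,\ \Omega \in \PD(B) \}
\]
by saying: there exist $\Lambda, \Omega$ that are feasible parameters, and there exists a \emph{second} pair $(\Lambda', \Omega')$ in the same fiber that differs from $(\Lambda, \Omega)$. The key point is that ``having another element in the fiber'' is itself an existential statement, so the whole thing stays within a single block of existential quantifiers and is thus an $\ETR$-formula.

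Concretely, I would write membership of $(\Lambda,\Omega)$ in $S_G$ as
\begin{align}
    &\exists\,\Lambda \in \IR^D,\ \Omega \in \PD(B),\ \Lambda' \in \IR^D,\ \Omega' \in \PD(B):\ \nonumber\\
    &\qquad (I-\Lambda)^{-T}\Omega(I-\Lambda)^{-1} = (I-\Lambda')^{-T}\Omega'(I-\Lambda')^{-1}\ \land\ (\Lambda,\Omega) \neq (\Lambda',\Omega'). \nonumber
\end{align}
To turn this into a legal $\ETR$-instance I would: (i) eliminate the matrix inverses by multiplying out, i.e.\ replace the equality of covariance matrices by $(I-\Lambda')^T\Sigma_0(I-\Lambda') = \Omega'$ and $(I-\Lambda)^T\Sigma_0(I-\Lambda)=\Omega$ after introducing $\Sigma_0$ as a fresh matrix of quantified variables standing for the common image, or more directly write $(I-\Lambda)^{T}\big[(I-\Lambda')^{-T}\Omega'(I-\Lambda')^{-1}\big](I-\Lambda) = \Omega$ and clear denominators using that $I-\Lambda$ and $I-\Lambda'$ are unipotent upper triangular, hence invertible with polynomial inverse of bounded degree; (ii) use Lemma~\ref{lem:pd_membership} ($\existsR$ version) to express $\Omega,\Omega' \in \PD(B)$, and the linear constraints $\Lambda,\Lambda' \in \IR^D$ as equations zeroing out the forbidden entries; (iii) express $(\Lambda,\Omega)\neq(\Lambda',\Omega')$ as a disjunction over entries $\bigvee_{i,j}(\lambda_{i,j}\neq\lambda'_{i,j}) \lor \bigvee_{i,j}(\omega_{i,j}\neq\omega'_{i,j})$, each disequality being allowed in $\ETR$ (and in any case reducible to an equation by the standard trick of introducing an inverse variable). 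All of this introduces only polynomially many new real variables and keeps everything inside one existential block, so the result is a single $\ETR$-formula of size polynomial in $|G|$.

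The one subtlety I expect to need care is the handling of the matrix inverses: a naive symbolic $(I-\Lambda)^{-1}$ is a rational function, not a polynomial, so I must either (a) exploit that, after topological sorting, $I-\Lambda$ is unipotent upper triangular so $(I-\Lambda)^{-1} = \sum_{k=0}^{n-1}\Lambda^k$ is an honest polynomial matrix of degree $\le n-1$ in the entries of $\Lambda$, or (b) introduce auxiliary matrices $M, M'$ with the defining equations $(I-\Lambda)M = I$, $(I-\Lambda')M' = I$ and use $M,M'$ in place of the inverses. Option (b) is cleanest and clearly polynomial-size; I would use it. Everything else — rewriting matrix equations entrywise, Tseitin-flattening to the quadratic forms of \eqref{eq:tseitin} if one wants the $\QUAD$ normal form — is routine and already invoked elsewhere in the paper. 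Finally I would remark that this same formula, read with the outer pair $(\Lambda,\Omega)$ as free rather than quantified, is exactly what Theorem~\ref{thm:generic:identification} needs when it feeds ``membership in $S_G$'' to Koiran's $\DIM$ algorithm, since $\DIM$ accepts semi-algebraic sets presented by existentially quantified formulas.
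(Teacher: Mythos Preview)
Your proposal is correct and takes essentially the same approach as the paper: introduce a fresh matrix $\Sigma$ (your $\Sigma_0$) for the common covariance, write $(I-\Lambda)^T\Sigma(I-\Lambda)=\Omega$ and $(I-\Lambda')^T\Sigma(I-\Lambda')=\Omega'$ to avoid inverses, use Lemma~\ref{lem:pd_membership} for $\PD(B)$, and express non-equality as a disjunction. One small wording issue: in your displayed formula you existentially quantify the outer $(\Lambda,\Omega)$, but for ``membership'' they must be free variables (as you yourself note in the final paragraph); the paper simply writes the conjunction $\Lambda\in\IR^D\land\Omega\in\PD(B)\land\exists\Sigma,\Lambda',\Omega':\ldots$ with $(\Lambda,\Omega)$ free.
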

\begin{proof}
    The membership of some $(\Lambda, \Omega)$ in $S_G$ can be expressed as
    \begin{align*}
        \Lambda \in \IR^D \land \Omega \in \PD(B) \land \exists \Sigma \in \IR^{n \times n}, \Lambda' \in \IR^D, \Omega' \in \PD(B)&: (I - \Lambda)^T \Sigma (I - \Lambda) = \Omega \nonumber\\
        &\quad \land (I - \Lambda')^T \Sigma (I - \Lambda') = \Omega' \nonumber\\
        &\quad \land (\Lambda \neq \Lambda' \lor \Omega \neq \Omega')\,.\qedhere
    \end{align*} 
\end{proof}

\section{A note on cyclic graphs}

\newcommand{\IRreg}{\IR_{\mathrm{reg}}}
%\color{red}
Our results so far have depended on the fact that every matrix $I-\Lambda$ 
%If a graph is cyclic, not every matrix $I-\Lambda$ 
with $\Lambda\in\IR^D$ is invertible if the graph is acyclic. 
However, if the graph is cyclic, $I-\Lambda$ is not necessarily invertible.
So in this case, we need to explicitly consider the subset $\IRreg^D$ of matrices $\Lambda\in \IR^D$ such that $I-\Lambda$ is invertible.

For matrices $\Lambda_0 \in \IRreg^D$ and $\Omega_0 \in \PD(B)$, \cite{foygel2012half} define fibers as
\[
   \fib_G(\Lambda_0,\Omega_0) = \{ (\Lambda,\Omega) \mid 
   \phi_G(\Lambda,\Omega) = \phi_G(\Lambda_0,\Omega_0),  
   \Lambda \in \IRreg^D, \Omega \in \PD(B) \}.
\]
They define generic identifiability for \emph{cyclic} graphs in terms of these fibers. 
That is a mixed (cyclic) graph $G$ is said to be \emph{generically} identifiable if
% $\phi_G$ is injective on the complement $\Theta \setminus {\cal V}$ of a proper (i.e., strict) 
% algebraic subset ${\cal V}\subset \Theta$.
%(i.e. $\phi_G$ injectiv on the complement of a strict algebraic subset, .. $\fib_G(\Lambda_0,\Omega_0) = 1$.. )
%$G$ is generically identifiable, if 
$|\fib_G(\Lambda_0,\Omega_0)| = 1$ for Zariski almost all    $\Lambda_0 \in \IRreg^D$ and $\Omega_0 \in \PD(B)$.

This is the same criterion used for acyclic graphs, except $\IR^D$ has been replaced by $\IRreg^D$ twice. 
Matrix invertibility can be easily expressed in $\existsR$, using the definition of invertibility that there exists a matrix whose product leads to the identity matrix.

Hence, all our results also hold for cyclic graphs.

\section{Conclusion}
Due to double exponential runtime, the state-of-the-art algorithm for the generic identification problem is often too slow to solve instances of reasonable size.
For example, García-Puente et al.~ \cite{garcia2010identifying}
note the runtime varies between seconds and 75 days for graphs with four nodes.
An interesting topic for future work would be to implement the (theoretical) algorithm 
presented in our paper.

We have given a new upper bound on the complexity of generic identifiability, namely $\exists\forall\IR$ and $\forall\exists\IR$.
Can this be further improved, for example to $\existsR$ or $\forallR$? In the light of our hardness proofs for the new notion of numerical identifiability, we conjecture that generic identifiability is hard for either $\existsR$ or $\forallR$.

\bibliographystyle{plain}
\bibliography{lit}

\begin{thebibliography}{42}
\providecommand{\natexlab}[1]{#1}
\providecommand{\url}[1]{\texttt{#1}}
\expandafter\ifx\csname urlstyle\endcsname\relax
  \providecommand{\doi}[1]{doi: #1}\else
  \providecommand{\doi}{doi: \begingroup \urlstyle{rm}\Url}\fi

\bibitem[Abrahamsen et~al.(2018)Abrahamsen, Adamaszek, and
  Miltzow]{abrahamsen2018art}
Mikkel Abrahamsen, Anna Adamaszek, and Tillmann Miltzow.
\newblock The art gallery problem is {$\exists\mathbb{R}$}-complete.
\newblock In \emph{Proc. of the 50th ACM SIGACT Symposium on Theory of
  Computing}, pages 65--73, 2018.

\bibitem[Anderson and Rubin(1956)]{anderson1956statistical}
TW~Anderson and Herman Rubin.
\newblock Statistical inference in factor analysis.
\newblock In \emph{Proceedings of the Berkeley Symposium on Mathematical
  Statistics and Probability}, page 111. University of California Press, 1956.

\bibitem[Bareinboim et~al.(2022)Bareinboim, Correa, Ibeling, and
  Icard]{bareinboim2022pearl}
Elias Bareinboim, Juan~D. Correa, Duligur Ibeling, and Thomas Icard.
\newblock \emph{On Pearl’s Hierarchy and the Foundations of Causal
  Inference}, pages 507--556.
\newblock Association for Computing Machinery, New York, NY, USA, 2022.

\bibitem[Basu et~al.(2003)Basu, Pollack, and Roy]{real}
Saugata Basu, Richard Pollack, and Marie-Fran\c{c}oise Roy.
\newblock \emph{Algorithms in Real Algebraic Geometry}.
\newblock Springer, 2003.

\bibitem[Bertschinger et~al.(2023)Bertschinger, Hertrich, Jungeblut, Miltzow,
  and Weber]{bertschinger2023training}
Daniel Bertschinger, Christoph Hertrich, Paul Jungeblut, Tillmann Miltzow, and
  Simon Weber.
\newblock Training fully connected neural networks is
  $\exists\mathbb{R}$-complete.
\newblock In \emph{Thirty-seventh Conference on Neural Information Processing
  Systems}, 2023.
\newblock URL \url{https://openreview.net/forum?id=H15KtcyHvn}.

\bibitem[Bochnak et~al.(2013)Bochnak, Coste, and Roy]{bochnak2013real}
J.~Bochnak, M.~Coste, and M.F. Roy.
\newblock \emph{Real Algebraic Geometry}.
\newblock Ergebnisse der Mathematik und ihrer Grenzgebiete. 3. Folge / A Series
  of Modern Surveys in Mathematics. Springer Berlin Heidelberg, 2013.
\newblock ISBN 9783662037188.
\newblock URL \url{https://books.google.at/books?id=GJv6CAAAQBAJ}.

\bibitem[Bollen(1989{\natexlab{a}})]{bollen1989structural}
Kenneth~A Bollen.
\newblock \emph{Structural equations with latent variables}, volume 210.
\newblock John Wiley \& Sons, 1989{\natexlab{a}}.

\bibitem[Bollen(1989{\natexlab{b}})]{bollen2014structural}
Kenneth~A Bollen.
\newblock \emph{Structural equations with latent variables}.
\newblock John Wiley \& Sons, 1989{\natexlab{b}}.

\bibitem[Bowden and Turkington(1984)]{bowden1984instrumental}
R.J. Bowden and D.A. Turkington.
\newblock \emph{Instrumental variables}.
\newblock Cambridge University Press, 1984.

\bibitem[Brito(2010)]{brito2010instrumental}
Carlos Brito.
\newblock Instrumental sets.
\newblock In Rina Dechter, Hector Geffner, and Joseph~Y Halpern, editors,
  \emph{Heuristics, Probability and Causality. A Tribute to Judea Pearl},
  chapter~17, pages 295--308. College Publications, 2010.

\bibitem[Brito and Pearl(2002{\natexlab{a}})]{BritoPearlUAI02}
Carlos Brito and Judea Pearl.
\newblock Generalized instrumental variables.
\newblock In \emph{Proc. Conference on Uncertainty in Artificial Intelligence
  (UAI)}, pages 85--93, 2002{\natexlab{a}}.

\bibitem[Brito and Pearl(2002{\natexlab{b}})]{brito2002graphical}
Carlos Brito and Judea Pearl.
\newblock A graphical criterion for the identification of causal effects in
  linear models.
\newblock In \emph{Proc. AAAI Conference on Artificial Intelligence}, pages
  533--538, 2002{\natexlab{b}}.

\bibitem[Browne(1974)]{browne1974generalized}
Michael~W Browne.
\newblock Generalized least squares estimators in the analysis of covariance
  structures.
\newblock \emph{South African Statistical Journal}, 8\penalty0 (1):\penalty0
  1--24, 1974.

\bibitem[Chen(2016)]{chen2016identification}
Bryant Chen.
\newblock Identification and overidentification of linear structural equation
  models.
\newblock In \emph{Proceedings of the 30th International Conference on Neural
  Information Processing Systems}, pages 1587--1595, 2016.

\bibitem[Chen et~al.(2015)Chen, Pearl, and Bareinboim]{chen2015incorporating}
Bryant Chen, Judea Pearl, and Elias Bareinboim.
\newblock Incorporating knowledge into structural equation models using
  auxiliary variables.
\newblock In \emph{Proc. International Joint Conference on Artificial
  Intelligence (IJCAI)}, pages 3577--3583, 2015.

\bibitem[Chen et~al.(2017)Chen, Kumor, and Bareinboim]{chen2017identification}
Bryant Chen, Daniel Kumor, and Elias Bareinboim.
\newblock Identification and model testing in linear structural equation models
  using auxiliary variables.
\newblock In \emph{International Conference on Machine Learning}, pages
  757--766. PMLR, 2017.

\bibitem[Drton(2018)]{drton2018algebraic}
Mathias Drton.
\newblock Algebraic problems in structural equation modeling.
\newblock In \emph{The 50th anniversary of Gr{\"o}bner bases}, pages 35--86.
  Mathematical Society of Japan, 2018.

\bibitem[Drton et~al.(2011)Drton, Foygel, and Sullivant]{drton2011global}
Mathias Drton, Rina Foygel, and Seth Sullivant.
\newblock Global identifiability of linear structural equation models.
\newblock \emph{The Annals of Statistics}, 39\penalty0 (2):\penalty0 865--886,
  2011.

\bibitem[Duncan(1975)]{duncan2014introduction}
Otis~Dudley Duncan.
\newblock \emph{Introduction to structural equation models}.
\newblock Academic Press, 1975.

\bibitem[Fisher(1966)]{fisher1966identification}
Franklin~M. Fisher.
\newblock \emph{The identification problem in econometrics}.
\newblock McGraw-Hill, 1966.

\bibitem[Fisher(1936)]{fisher1936design}
Ronald~Aylmer Fisher.
\newblock Design of experiments.
\newblock \emph{British Medical Journal}, 1\penalty0 (3923):\penalty0 554,
  1936.

\bibitem[Foygel et~al.(2012)Foygel, Draisma, and Drton]{foygel2012half}
Rina Foygel, Jan Draisma, and Mathias Drton.
\newblock Half-trek criterion for generic identifiability of linear structural
  equation models.
\newblock \emph{The Annals of Statistics}, 40\penalty0 (3):\penalty0
  1682--1713, 2012.

\bibitem[Garc{\'\i}a-Puente et~al.(2010)Garc{\'\i}a-Puente, Spielvogel, and
  Sullivant]{garcia2010identifying}
Luis~D. Garc{\'\i}a-Puente, Sarah Spielvogel, and Seth Sullivant.
\newblock {Identifying Causal Effects with Computer Algebra}.
\newblock In \emph{Proc. Conference on Uncertainty in Artificial Intelligence
  (UAI)}, pages 193--200. AUAI Press, 2010.

\bibitem[Ger\v{s}gorin(1931)]{gershgorin1931uber}
S~Ger\v{s}gorin.
\newblock {\"U}ber die {A}bgrenzung der {E}igenwerte einer {M}atrix.
\newblock \emph{Bulletin de l'Acad\'emie des Sciences de l'URSS. Classe des
  sciences math\'ematiques et na}, \penalty0 (6):\penalty0 749--754, 1931.

\bibitem[Gupta and Bl{\"a}ser(2024)]{TreeID2024gupta}
Aaryan Gupta and Markus Bl{\"a}ser.
\newblock Identification for tree-shaped structural causal models in polynomial
  time.
\newblock In \emph{Proceedings of the AAAI Conference on Artificial
  Intelligence}, volume~38, pages 20404--20411, 2024.

\bibitem[Hu et~al.(1992)Hu, Bentler, and Kano]{hu1992can}
Li-tze Hu, Peter~M Bentler, and Yutaka Kano.
\newblock Can test statistics in covariance structure analysis be trusted?
\newblock \emph{Psychological bulletin}, 112\penalty0 (2):\penalty0 351, 1992.

\bibitem[J{\"o}reskog(1969)]{joreskog1969general}
Karl~G J{\"o}reskog.
\newblock A general approach to confirmatory maximum likelihood factor
  analysis.
\newblock \emph{Psychometrika}, 34\penalty0 (2):\penalty0 183--202, 1969.

\bibitem[Koiran(1999)]{DBLP:journals/jc/Koiran99}
Pascal Koiran.
\newblock The real dimension problem is {NP}$_\mathbb{R}$-complete.
\newblock \emph{J. Complex.}, 15\penalty0 (2):\penalty0 227--238, 1999.
\newblock \doi{10.1006/JCOM.1999.0502}.
\newblock URL \url{https://doi.org/10.1006/jcom.1999.0502}.

\bibitem[Kumor et~al.(2019)Kumor, Chen, and
  Bareinboim]{kumor2019instrumentalCutsets}
Daniel Kumor, Bryant Chen, and Elias Bareinboim.
\newblock Efficient identification in linear structural causal models with
  instrumental cutsets.
\newblock In \emph{Proc. Advances in Neural Information Processing Systems
  (NeurIPS)}, pages 12477--12486, 2019.

\bibitem[Kumor et~al.(2020)Kumor, Cinelli, and
  Bareinboim]{kumor2020auxiliaryCutsets}
Daniel Kumor, Carlos Cinelli, and Elias Bareinboim.
\newblock Efficient identification in linear structural causal models with
  auxiliary cutsets.
\newblock In \emph{Proc. International Conference on Machine Learning (ICML)},
  pages 5501--5510. PMLR, 2020.

\bibitem[Muthen(1983)]{muthen1983latent}
Bengt Muthen.
\newblock Latent variable structural equation modeling with categorical data.
\newblock \emph{Journal of Econometrics}, 22\penalty0 (1-2):\penalty0 43--65,
  1983.

\bibitem[Pearl(2001)]{PearlConditionalIV}
Judea Pearl.
\newblock Parameter identification: A new perspective.
\newblock Technical Report R-276, UCLA, 2001.

\bibitem[Pearl(2009)]{Pearl2009}
Judea Pearl.
\newblock \emph{Causality}.
\newblock Cambridge University Press, 2009.
\newblock ISBN 0-521-77362-8.

\bibitem[Renegar(1992)]{renegar1992computational}
James Renegar.
\newblock {On the computational complexity and geometry of the first-order
  theory of the reals. Parts I-III}.
\newblock \emph{Journal of symbolic computation}, 13\penalty0 (3):\penalty0
  255--352, 1992.

\bibitem[Schaefer and Stefankovic(2017)]{DBLP:journals/mst/SchaeferS17}
Marcus Schaefer and Daniel Stefankovic.
\newblock Fixed points, {Nash} equilibria, and the {Existential Theory of the
  Reals}.
\newblock \emph{Theory Comput. Syst.}, 60\penalty0 (2):\penalty0 172--193,
  2017.
\newblock \doi{10.1007/S00224-015-9662-0}.
\newblock URL \url{https://doi.org/10.1007/s00224-015-9662-0}.

\bibitem[Schaefer and Stefankovic(2024)]{DBLP:journals/mst/SchaeferS24}
Marcus Schaefer and Daniel Stefankovic.
\newblock Beyond the existential theory of the reals.
\newblock \emph{Theory Comput. Syst.}, 68\penalty0 (2):\penalty0 195--226,
  2024.
\newblock \doi{10.1007/S00224-023-10151-X}.
\newblock URL \url{https://doi.org/10.1007/s00224-023-10151-x}.

\bibitem[van~der Zander and
  Li\'{s}kiewicz(2016)]{instrumentalVariablesZander2016}
Benito van~der Zander and Maciej Li\'{s}kiewicz.
\newblock On searching for generalized instrumental variables.
\newblock In \emph{Proc. International Conference on Artificial Intelligence
  and Statistics (AISTATS)}, pages 1214--1222, 2016.

\bibitem[van~der Zander et~al.(2015)van~der Zander, Textor, and
  Li\'{s}kiewicz]{IJCAIInstruments}
Benito van~der Zander, Johannes Textor, and Maciej Li\'{s}kiewicz.
\newblock Efficiently finding conditional instruments for causal inference.
\newblock In \emph{Proc. International Joint Conference on Artificial
  Intelligence (IJCAI)}, pages 3243--3249, 2015.

\bibitem[van~der Zander et~al.(2022)van~der Zander, Wien{\"o}bst, Bl{\"a}ser,
  and Li\'{s}kiewicz]{TreeID2022vanderzander}
Benito van~der Zander, Marcel Wien{\"o}bst, Markus Bl{\"a}ser, and Maciej
  Li\'{s}kiewicz.
\newblock Identification in tree-shaped linear structural causal models.
\newblock In \emph{International Conference on Artificial Intelligence and
  Statistics}, pages 6770--6792. PMLR, 2022.

\bibitem[Weihs et~al.(2018)Weihs, Robinson, Dufresne, Kenkel, McGee~II,
  Reginald, Nguyen, Robeva, and Drton]{identifyingEdgeWiseDeterminantalDrton}
Luca Weihs, Bill Robinson, Emilie Dufresne, Jennifer Kenkel, Kaie
  Kubjas~Reginald McGee~II, McGee~II Reginald, Nhan Nguyen, Elina Robeva, and
  Mathias Drton.
\newblock Determinantal generalizations of instrumental variables.
\newblock \emph{Journal of Causal Inference}, 6\penalty0 (1), 2018.

\bibitem[Wright(1928)]{wright1928tariff}
Philip~G Wright.
\newblock \emph{Tariff on animal and vegetable oils}.
\newblock Macmillan Company, New York, 1928.

\bibitem[Wright(1921)]{wright1921correlation}
Sewall Wright.
\newblock Correlation and causation.
\newblock \emph{Journal of agricultural research}, 20\penalty0 (7):\penalty0
  557, 1921.

\end{thebibliography}

\appendix

\section*{Appendix}

\section{Edge Identifiability}
\label{sec:edge}
For edge identifiability, we obtain the same results as for identifiability itself.
\subsection{Hardness of Numerical Edge Identifiability}
If we analyze the reduction in Theorem~\ref{thm:num_ident_hardness} in a bit more detail, we can use the same reduction to show
\begin{corollary}
    Numerical edge identifiability is $\forallR$-hard.
\end{corollary}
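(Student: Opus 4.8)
The plan is to reuse the reduction from the proof of Theorem~\ref{thm:num_ident_hardness} essentially verbatim, observing that the graph $G$ and matrix $\Sigma$ constructed there already have all the structure we need. Recall that in that reduction, by Lemma~\ref{lem:drton_fiber_iso}, the fiber $\fib_G(\Sigma)$ is isomorphic to the set of $\Lambda \in \IR^D$ solving the missing-edge equations, and this solution set was put in bijection with the set of solutions $\xi \in \IR^n$ of the $\QUADpp$-instance via $\lambda_{i,r} = \xi_i$ (together with the forced copies $\lambda_{i',i} = \lambda_{j',j} = \xi_c$ inside each multiplication gadget). The key point is that the correspondence between the coordinate $\lambda_{1,r}$ of a fiber element and the coordinate $\xi_1$ of a $\QUADpp$-solution is \emph{exact}, not merely generic: every fiber element has $\lambda_{1,r}$ equal to the $x_1$-value of some solution, and conversely.

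The steps I would carry out: First, fix the distinguished edge to be $1 \to r$ (a directed edge present in $G$ by construction of the variable gadget), so that the edge fiber of interest is $\fib^{1,r}_G(\Sigma) = \{\lambda_{1,r} \mid (\Lambda,\Omega) \in \fib_G(\Sigma)\}$. Second, invoke the established bijection between $\fib_G(\Sigma)$ and the solution set of the $\QUADpp$-instance to conclude $\fib^{1,r}_G(\Sigma) = \{\xi_1 \mid p_1(\xi) = \cdots = p_m(\xi) = 0\}$, the projection of the solution variety onto its first coordinate. Third, observe that $|\fib^{1,r}_G(\Sigma)| = 1$ iff all solutions of the $\QUADpp$-instance agree on $x_1$. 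This last condition is not literally ``the instance has a unique solution'', so a small adjustment to the $\QUADpp$-instance is needed to make it match: before building $G$, I would pad the $\QUADpp$-instance with a fresh variable $x_0$ constrained by $x_0(x_0-1)=0$ together with the linking constraints $x_0 x_i = x_i$ for every original variable $x_i$ of the planted-solution construction (or, more simply, ensure the planted extra solution and the ``real'' solutions differ precisely in the $x_0$-coordinate). Concretely, the extra-solution construction of Lemma~\ref{lem:extra_solution} already has this flavor: the two solution families differ exactly in the value of the planted variable $y$, since in one case $y=1$ and in the other $y=0$, while all original $x_i$ are pinned to $0$ in the $y=1$ branch. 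So if we take the distinguished edge to be the one carrying $\lambda_{\bullet,r}$ corresponding to $y$, then $|\fib^{\bullet,r}_G(\Sigma)| = 1$ iff the original $\QUAD$-instance is \emph{un}satisfiable, which is exactly the $\forallR$-hard problem.

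The main obstacle to watch for is making sure that the value of the distinguished coordinate genuinely distinguishes the two solution branches and that no collapsing occurs: I must check that in the planted construction the $y$-coordinate of the two solution families is always $\{0\}$ versus $\{1\}$ (never coinciding), and that the gadget encoding of $\lambda_{y,r} = y$ faithfully carries this into the edge fiber without introducing spurious values. Granting that — which follows directly from Lemma~\ref{lem:extra_solution} and Observation~\ref{obs:sigma_unique} — we get a polynomial-time reduction from the complement of $\QUAD$ (equivalently, from an $\forallR$-complete problem) to numerical edge identifiability, establishing the corollary. No new machinery beyond the already-constructed $G$, $\Sigma$, and the fiber characterization is required; the work is entirely in the bookkeeping of which coordinate to single out.
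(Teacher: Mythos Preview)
Your proposal is correct and, once you discard the unnecessary padding idea, lands on exactly the paper's argument: start from a $\QUADpp$-instance built via the planted-solution construction of Lemma~\ref{lem:extra_solution}, take the distinguished variable $y$ (which the paper simply renames to $x_1$), and observe that the corresponding edge fiber has size $1$ iff the original $\QUAD$-instance is unsatisfiable. The only difference is presentational: the paper goes straight to this choice, whereas you first consider and then correctly reject picking an arbitrary variable.
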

\begin{proof}
    If instead of starting with an arbitrary $\QUADpp$-instance, we start with a $\QUADpp$-instance generated by Lemma~\ref{lem:extra_solution} and Corollary~\ref{col:quadpp_hardness}.
    These instances have a distinguished variable $y$, such that there is always a single solution with $y=1$ and possibly multiple solutions with $y=0$.
    W.l.o.g.\ let $x_1$ be this distinguished variable.
    Following the reduction in Theorem~\ref{thm:num_ident_hardness} we construct a graph $G$ and a $\Sigma \in \im \phi_G$, such that the fiber $\fib_G(\Sigma)$ is isomorphic to the solutions our $\QUADpp$-instance.
    In particular the value of $\lambda_{1,r}$ in the elements of $\fib_G(\Sigma)$ is exactly the value of $x_1$ in solutions to the $\QUADpp$-instance.
    Thus $|\fib^{1,r}_G(\Sigma)| = 1$ iff the $\QUADpp$ instance has exactly one solution, otherwise $|\fib^{1,r}_G(\Sigma)| = 2$.
\end{proof}

\subsection{Upper Bound for Numerical Edge Identifiability}
Similarly to \eqref{eq:num_ident_forallr_formula}, we can express numerical edge identifiability as the $\forallR$-formula
\begin{align}
    \forall \Lambda_1, \Lambda_2 \in \IR^D, &\Omega_1, \Omega_2 \in \PD(B): \nonumber\\
    &\phi_G(\Lambda_1, \Omega_1) = \phi_G(\Lambda_2, \Omega_2) = \Sigma \Rightarrow (\Lambda_1)_{i,j} = (\Lambda_2)_{i,j}\,.
\end{align}
This yields
\begin{theorem}
    Numerical edge identifiability is in (the promise version of) $\forallR$.
\end{theorem}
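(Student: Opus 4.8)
The plan is to verify that the displayed universally quantified formula is correct and that it is syntactically a $\forallR$-sentence, in complete analogy with the argument following \eqref{eq:num_ident_forallr_formula} for plain numerical identifiability.

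First I would check correctness under the promise. Since $\Sigma$ is assumed feasible, $\fib_G(\Sigma)$ is nonempty, and a pair $(\Lambda_k,\Omega_k)$ with $\Lambda_k\in\IR^D$, $\Omega_k\in\PD(B)$ lies in $\fib_G(\Sigma)$ exactly when $\phi_G(\Lambda_k,\Omega_k)=\Sigma$. Hence the premise of the implication ranges precisely over pairs of fiber elements, and the conclusion $(\Lambda_1)_{i,j}=(\Lambda_2)_{i,j}$ says they agree in the $(i,j)$ coordinate. So the sentence is true iff all elements of the fiber share the same value of $\lambda_{i,j}$, i.e.\ iff $|\fib^{i,j}_G(\Sigma)|\le 1$, which under the promise (nonempty fiber) means $|\fib^{i,j}_G(\Sigma)|=1$.

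Next I would turn the sentence into a genuine $\ETR$-style formula. The constraints $\Lambda_k\in\IR^D$ are conjunctions of linear equations zeroing out the non-edge entries; the conclusion $(\Lambda_1)_{i,j}=(\Lambda_2)_{i,j}$ is a single linear equation; and, exactly as in the main text, I would replace $\phi_G(\Lambda_k,\Omega_k)=\Sigma$ by the polynomial identity $\Omega_k=(I-\Lambda_k)^T\Sigma(I-\Lambda_k)$, which is equivalent because $I-\Lambda_k$ is invertible for every $\Lambda_k\in\IR^D$ when $G$ is acyclic, and which expands entrywise into polynomial equations with the entries of $\Sigma$ as constants. Membership $\Omega_k\in\PD(B)$ is expressible in $\forallR$ by Lemma~\ref{lem:pd_membership}. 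Assembling these pieces, the whole sentence is a universally quantified Boolean combination of polynomial (in)equalities.

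The one point that needs a moment's care — and the only place I would expect any friction — is making sure the formula stays purely universal: the extra quantifiers come only from the $\PD(B)$-gadget, which has the form $\forall x\neq 0:\ x^T\Omega x>0$, so every quantifier introduced is universal, and since $\forallR$ is closed under conjunction, disjunction, and prefixing further universal quantifiers, the sentence remains in $\forallR$. Finally, as in Remark~\ref{rem:promise}, feasibility of $\Sigma$ is not verified by this formula, so strictly the membership is in $\mathrm{Promise}\text{-}\forallR$.
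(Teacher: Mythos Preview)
Your overall strategy matches the paper's, and the correctness argument for the formula is fine. There is, however, a genuine slip in the ``one point that needs a moment's care.'' The $\PD(B)$-gadget you chose is the $\forallR$ version $\forall x\neq 0:\ x^T\Omega x>0$, but this subformula sits in the \emph{hypothesis} of the implication (equivalently, in the domain restriction of the outer universal quantifier). When you put the sentence into prenex form, that hypothesis is negated, so the inner $\forall x$ becomes $\exists x$, and you end up with a $\forall\exists$ sentence rather than a pure $\forall$ sentence. The closure properties you invoke (conjunction, disjunction, prefixing universals) do not help here, because the subformula occurs under a negation.

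The fix is easy and is exactly why Lemma~\ref{lem:pd_membership} supplies \emph{both} quantifier types: use the $\existsR$ expression for $\Omega_k\in\PD(B)$ (Cholesky factorisation). Then in the hypothesis you have $\exists L_k:\chi_k(\Omega_k,L_k)$, whose negation is $\forall L_k:\neg\chi_k$, and pulling these universal quantifiers to the front keeps the sentence purely universal. Alternatively, you can observe that under the promise $\Sigma\in\im\phi_G$ the matrix $(I-\Lambda_k)^T\Sigma(I-\Lambda_k)$ is automatically positive definite for every $\Lambda_k\in\IR^D$, so only the quantifier-free sparsity constraints of $\PD(B)$ need to be enforced; either route yields a genuine $\forallR$ sentence.
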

Again using Renegar's algorithm we also get
\begin{corollary}
    Numerical edge identifiability can be decided in polynomial space.
\end{corollary}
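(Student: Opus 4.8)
The plan is to repeat, essentially verbatim, the argument that gave Corollary~\ref{cor:num_ident_pspace} for ordinary numerical identifiability. Numerical edge identifiability is a promise problem: the input $\Sigma$ is guaranteed to lie in $\im\phi_G$ and the task is only to decide whether the edge fiber $\fib^{i,j}_G(\Sigma)$ has size exactly one. So the first step is to discharge the promise: run the $\existsR$-feasibility test of Lemma~\ref{lem:im_phi_G_membership} on $(G,\Sigma)$. If it reports $\Sigma\notin\im\phi_G$ we may output anything, since the input violates the promise; otherwise we continue with the guarantee that the edge fiber is non-empty.

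Second, I would use the $\forallR$-sentence from the theorem immediately above, which states that any two parameter pairs $(\Lambda_1,\Omega_1),(\Lambda_2,\Omega_2)$ mapping to $\Sigma$ agree in the $(i,j)$-entry of their $\Lambda$-components; as in \eqref{eq:num_ident_forallr_formula}, the equalities $\phi_G(\Lambda_t,\Omega_t)=\Sigma$ are enforced by substituting $\Omega_t=(I-\Lambda_t)^T\Sigma(I-\Lambda_t)$, which is sound because $I-\Lambda_t$ is invertible for acyclic $G$ (and, by the note on cyclic graphs, invertibility can itself be added as an $\existsR$-constraint in the general case). Inlining the $\PD(B)$-membership subformula via the $\forallR$-side of Lemma~\ref{lem:pd_membership} keeps this a sentence with a single leading universal quantifier block, polynomially many variables, and polynomial size.

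Finally, both sentences --- the $\existsR$ feasibility sentence and the $\forallR$ edge-identifiability sentence --- have only a constant number (in fact zero) of quantifier alternations, so Renegar's algorithm \cite{renegar1992computational} decides each in polynomial space, and running one after the other stays in $\PSPACE$. The only thing to verify, and it is purely a matter of bookkeeping, is that inlining the Cholesky / positive-definiteness gadgets of Lemma~\ref{lem:pd_membership} does not introduce extra quantifier alternations: on the $\existsR$ side they sit inside the same existential block, and on the $\forallR$ side inside the same universal block, so the alternation depth --- and hence the polynomial-space bound --- is preserved. This is the (non-)obstacle; there is no genuine difficulty beyond it, as the statement is a direct combination of the preceding theorem, Lemma~\ref{lem:im_phi_G_membership}, and Renegar's algorithm, exactly parallel to Corollary~\ref{cor:num_ident_pspace}.
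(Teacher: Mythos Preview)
Your proposal is correct and follows essentially the same approach as the paper: combine the $\existsR$ feasibility check of Lemma~\ref{lem:im_phi_G_membership} with the $\forallR$ edge-identifiability sentence of the preceding theorem, then invoke Renegar's algorithm to get the $\PSPACE$ bound. The paper states this in a single line (``Again using Renegar's algorithm we also get\dots''), and your write-up simply unpacks that line in the same way Corollary~\ref{cor:num_ident_pspace} was obtained.
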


\subsection{Generic Edge Identifiability is in PSPACE}
We modify the algorithm of Section~\ref{sec:gen_ident_in_pspace} to work with generic edge identifiability for some $\lambda_{i,j}$ rather than generic identifiability.
Let $S^{i,j}_G = \{ (\Lambda, \Omega) \mid |\fib^{i,j}_G(\Lambda, \Omega)| > 1, \Lambda \in \IR^D, \Omega \in \PD(B)  \}$.
Then we have the following analog to Observation~\ref{obs:key}:
\begin{observation} \label{obs:key:edge}
$\lambda_{i,j}$ is generically edge identifiable iff $\dim \IR^D \cdot \dim \PD(B) > \dim S^{i,j}_G$.
\end{observation}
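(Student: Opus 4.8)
The plan is to transcribe the one-line argument behind Observation~\ref{obs:key}, replacing ``fiber'' by ``edge fiber'' throughout. By definition, $\lambda_{i,j}$ is generically edge identifiable exactly when $|\fib^{i,j}_G(\Lambda_0,\Omega_0)| = 1$ for all $(\Lambda_0,\Omega_0)$ outside some proper algebraic subset of $\Theta = \IR^D \times \PD(B)$. Since the edge fiber through $(\Lambda_0,\Omega_0)$ always contains $(\Lambda_0)_{i,j}$, it is nonempty, so $|\fib^{i,j}_G| \neq 1$ is the same as $|\fib^{i,j}_G| > 1$, i.e.\ as membership in $S^{i,j}_G$. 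Hence generic edge identifiability of $\lambda_{i,j}$ is equivalent to $S^{i,j}_G$ being contained in a proper algebraic subset of $\Theta$.

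Next I would record two routine facts. First, $\Theta$ is irreducible: $\IR^D$ is a linear subspace, and $\PD(B)$ is a nonempty Euclidean-open, hence Zariski-dense, subset of the linear space $L$ of symmetric matrices vanishing off the diagonal and $B$; so the Zariski closure of $\Theta$ is the irreducible affine variety $\IR^D \times L$, of dimension $\dim\Theta$, which is what the paper denotes $\dim\IR^D\cdot\dim\PD(B)$ (cf.\ Observation~\ref{obs:key}). Second, $S^{i,j}_G$ is semialgebraic: membership of $(\Lambda,\Omega)$ is witnessed by $\Lambda\in\IR^D$, $\Omega\in\PD(B)$, and the existence of $\Sigma,\Lambda',\Omega'$ with $(I-\Lambda)^T\Sigma(I-\Lambda)=\Omega$, $(I-\Lambda')^T\Sigma(I-\Lambda')=\Omega'$, and $\Lambda_{i,j}\neq\Lambda'_{i,j}$ --- literally the formula of Lemma~\ref{lem:S_G_membership} with the final disjunction $(\Lambda\neq\Lambda'\lor\Omega\neq\Omega')$ replaced by $\Lambda_{i,j}\neq\Lambda'_{i,j}$; in particular $\dim S^{i,j}_G$ is the dimension of its Zariski closure.

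The key step is the standard principle that for a semialgebraic subset $A$ of an irreducible affine variety $X$, $A$ is contained in a proper algebraic subset of $X$ iff $\overline{A}^{\,\mathrm{Zar}}\subsetneq X$ iff $\dim A < \dim X$, using that a proper Zariski-closed subset of an irreducible variety has strictly smaller dimension. Applying this with $A = S^{i,j}_G$ and $X = \IR^D\times L$ gives: $\lambda_{i,j}$ is generically edge identifiable iff $\dim S^{i,j}_G < \dim\IR^D\cdot\dim\PD(B)$, which is the claim. The only point needing care is this last equivalence --- exactly the one already used implicitly in Observation~\ref{obs:key} --- and it is elementary semialgebraic geometry; everything else is a mechanical substitution into that earlier proof, so I expect no real obstacle.
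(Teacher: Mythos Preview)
Your proposal is correct and follows exactly the same approach as the paper, which for Observation~\ref{obs:key:edge} gives no proof at all and for the analogous Observation~\ref{obs:key} offers only the one-liner ``the righthand side is just the definition.'' You have simply (and usefully) unpacked the implicit step---that containment of a semialgebraic set in a proper algebraic subset of an irreducible ambient space is equivalent to a strict dimension drop---that the paper takes for granted.
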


\begin{lemma}
    Membership in $S^{ij}_G$ can be expressed in $\existsR$.
\end{lemma}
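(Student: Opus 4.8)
The plan is to mimic the proof of Lemma~\ref{lem:S_G_membership} almost verbatim; the only new ingredient is how one certifies that the \emph{projected} fiber is non-trivial. The key observation I would record first is that $\fib_G(\Lambda,\Omega)$ always contains $(\Lambda,\Omega)$ itself, so $\Lambda_{i,j}$ is always one of the values in $\fib^{i,j}_G(\Lambda,\Omega)$. Consequently $|\fib^{i,j}_G(\Lambda,\Omega)| > 1$ holds if and only if there exists a single pair $(\Lambda',\Omega') \in \fib_G(\Lambda,\Omega)$ with $\Lambda'_{i,j} \neq \Lambda_{i,j}$: at most one of two distinct projected values can coincide with $\Lambda_{i,j}$, so one extra witness already suffices (and is obviously necessary).

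Given this, I would write membership of $(\Lambda,\Omega)$ in $S^{i,j}_G$ as
\begin{align*}
  \Lambda \in \IR^D \land \Omega \in \PD(B) \land \exists \Sigma \in \IR^{n \times n}, \Lambda' \in \IR^D, \Omega' \in \PD(B) &: (I - \Lambda)^T \Sigma (I - \Lambda) = \Omega \\
  &\quad \land (I - \Lambda')^T \Sigma (I - \Lambda') = \Omega' \\
  &\quad \land \Lambda_{i,j} \neq \Lambda'_{i,j}\,,
\end{align*}
and then unfold it into a genuine $\ETR$ instance exactly as in Lemmas~\ref{lem:pd_membership} and~\ref{lem:S_G_membership}: the constraints $\Omega,\Omega' \in \PD(B)$ are replaced by the $\existsR$ expression of Lemma~\ref{lem:pd_membership}, the constraints $\Lambda,\Lambda' \in \IR^D$ are conjunctions of equations forcing the forbidden entries to $0$, the two matrix identities become one polynomial equation per matrix entry after quantifying over all entries of $\Sigma,\Lambda',\Omega'$, and $\Lambda_{i,j} \neq \Lambda'_{i,j}$ is a single atomic formula. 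The number of quantified variables and the formula size stay polynomial in $|G|$, so this is a valid $\existsR$ description. This is also precisely the form that Observation~\ref{obs:key:edge} and the algorithm of Theorem~\ref{thm:generic:identification} need in order to carry over to the edge variant.

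I do not expect a genuine obstacle here. The one point that deserves care is the equivalence in the first paragraph (one extra witness is enough), which is a trivial pigeonhole argument, together with the fact — already used throughout the paper — that $I - \Lambda'$ is invertible for every $\Lambda' \in \IR^D$ in the acyclic case, so that $\Omega' = (I-\Lambda')^T\Sigma(I-\Lambda')$ faithfully encodes $\phi_G(\Lambda',\Omega') = \Sigma$; in the cyclic setting one replaces $\IR^D$ by $\IRreg^D$ and adds the ($\existsR$-expressible) invertibility constraint, exactly as in the note on cyclic graphs.
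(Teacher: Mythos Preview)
Your proposal is correct and essentially identical to the paper's own proof: the paper writes down exactly the same formula (replacing $\Lambda \neq \Lambda' \lor \Omega \neq \Omega'$ in Lemma~\ref{lem:S_G_membership} by $\Lambda_{i,j} \neq \Lambda'_{i,j}$) and refers back to Lemma~\ref{lem:S_G_membership} without further comment. Your additional justification that a single witness $(\Lambda',\Omega')$ with $\Lambda'_{i,j} \neq \Lambda_{i,j}$ suffices to certify $|\fib^{i,j}_G(\Lambda,\Omega)| > 1$ is a welcome clarification that the paper leaves implicit.
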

\begin{proof}
    We use a similar formula to Lemma~\ref{lem:S_G_membership}:
    \begin{align*}
        \Lambda \in \IR^D \land \Omega \in \PD(B) \land \exists \Sigma \in \IR^{m \times m}, \Lambda' \in \IR^D, \Omega' \in \PD(B)&: (I - \Lambda)^T \Sigma (I - \Lambda) = \Omega \nonumber\\
        &\quad \land (I - \Lambda')^T \Sigma (I - \Lambda') = \Omega' \nonumber\\
        &\quad \land (\Lambda_{i,j} \neq \Lambda'_{i,j})
    \end{align*}
\end{proof}

\begin{theorem}
Generic edge identifiability is both in $\forall\exists\IR$ and $\exists\forall\IR$.
\end{theorem}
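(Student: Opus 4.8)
The plan is to mirror exactly the structure of the proof of Theorem~\ref{thm:generic:identification}, with $S_G$ replaced by $S^{i,j}_G$ and Observation~\ref{obs:key} replaced by Observation~\ref{obs:key:edge}. First I would invoke the preceding lemma to express membership in $S^{i,j}_G$ as an $\ETR$-formula, and recall that membership in $\IR^D$ and in $\PD(B)$ are expressible in $\ETR$ by Lemma~\ref{lem:pd_membership}. All of these formulas have size and number of variables polynomial in the size of $G$, so the dimensions of $\IR^D$, $\PD(B)$, and $S^{i,j}_G$ are each bounded by $m^2$ (where $m$ is the number of relevant $\lambda$-parameters, at most $n^2$), and each ``$\dim \ge d$'' query falls to the $\DIM$ problem, which is $\existsR$-complete by Koiran's theorem, even for existentially-quantified input formulas.

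Next I would assemble the single formula. By Observation~\ref{obs:key:edge}, $\lambda_{i,j}$ is generically edge identifiable iff $\dim \IR^D \cdot \dim \PD(B) > \dim S^{i,j}_G$. As in the proof of Theorem~\ref{thm:generic:identification}, I replace the iterated dimension computations by a finite disjunction over candidate dimension values:
\[
    \bigvee_{d_1=0}^{m^2}\bigvee_{d_2=0}^{m^2}\Bigl(\dim \IR^D \ge d_1 \;\land\; \dim \PD(B) \ge d_2 \;\land\; \dim S^{i,j}_G < d_1 d_2\Bigr).
\]
The first two conjuncts in each disjunct are $\existsR$-sentences (instances of $\DIM$ on existentially-presented sets), while the third conjunct is $\lnot\bigl(\dim S^{i,j}_G \ge d_1 d_2\bigr)$, the negation of an $\existsR$-sentence, hence a $\forallR$-sentence by De Morgan. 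Since the existential blocks and the universal block use disjoint sets of bound variables and are combined only by Boolean connectives, the whole formula can be written with one universal block followed by one existential block, or vice versa, giving membership in both $\forall\exists\IR$ and $\exists\forall\IR$. (The outer disjunction over the $O(m^4)$ pairs $(d_1,d_2)$ is a polynomial-size Boolean combination, so it does not increase the quantifier alternation.)

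I do not anticipate a serious obstacle here: the argument is a verbatim adaptation of Theorem~\ref{thm:generic:identification}, and the only new ingredient, expressing $S^{i,j}_G$ in $\existsR$, is already supplied by the lemma immediately preceding this theorem. The one point that warrants a sentence of care is checking that the projection defining the edge fiber does not change the dimension bound or the $\existsR$-expressibility — but projection of a semialgebraic set presented by an existential formula is again presented by an existential formula (just existentially quantify out the projected-away coordinates), so both the size bound and the $\DIM$ reduction go through unchanged.
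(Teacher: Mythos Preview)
Your proposal is correct and follows exactly the paper's own approach: the paper's proof is a single sentence stating that one reuses the algorithm of Theorem~\ref{thm:generic:identification} with $S_G$ replaced by $S^{i,j}_G$, which is precisely what you spell out. Your closing remark about projections is slightly over-cautious---$S^{i,j}_G$ is itself a set of pairs $(\Lambda,\Omega)$, not a projection, and its $\existsR$-expressibility is handled directly by the preceding lemma---but this does no harm.
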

\begin{proof}
We use the algorithm of Theorem~\ref{thm:generic:identification}, but replace the set $S_G$ by $S^{i,j}_G$.
\end{proof}
\begin{corollary}
 Generic edge identifiability can be decided in $\PSPACE$.   
\end{corollary}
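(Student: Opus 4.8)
The plan is to obtain this as an immediate consequence of the preceding theorem together with Renegar's algorithm, in exact parallel with Corollary~\ref{cor:gen_ident:pspace}. First I would note that the preceding theorem provides a polynomial-time many-one reduction from generic edge identifiability to deciding the truth of a real sentence with a single quantifier alternation (of type $\forall\exists\IR$, or equivalently $\exists\forall\IR$): the target sentence is the polynomial-size disjunction over $d_1,d_2 \in \{0,\dots,m^2\}$ of the conjunctions $\dim \IR^D \geq d_1 \land \dim \PD(B) \geq d_2 \land \lnot(\dim S^{i,j}_G \geq d_1 d_2)$, where each conjunct is an $\ETR$- or negated-$\ETR$-instance of polynomial size obtained from Lemma~\ref{lem:pd_membership}, the $\existsR$-completeness of $\DIM$, and the $\existsR$-expressibility of membership in $S^{i,j}_G$.

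Next I would invoke Renegar's result (as recalled in Section~\ref{sec:real}): the truth of any real sentence with a constant number of quantifier alternations is decidable in $\PSPACE$. Since $\PSPACE$ is closed under polynomial-time many-one reductions, composing the reduction above with Renegar's decision procedure yields a polynomial-space algorithm deciding generic edge identifiability, which is exactly the claim.

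I do not expect any genuine obstacle here. The only point that needs a line of justification is that the formula produced by the reduction has polynomially many variables, polynomial total length, and at most one quantifier alternation, all of which are read off directly from the construction in the proof of Theorem~\ref{thm:generic:identification} with $S_G$ replaced by $S^{i,j}_G$. As an alternative route that avoids the single-formula formulation, one could instead run the oracle algorithm from that proof and replace each $\ETR$-oracle call by a $\PSPACE$ subroutine (via Renegar), using that polynomially many $\PSPACE$ subroutine calls still run in polynomial space; correctness then follows from Observation~\ref{obs:key:edge}. Either way, this is a routine corollary rather than a step requiring new ideas.
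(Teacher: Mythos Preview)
Your proposal is correct and follows exactly the paper's approach: the preceding theorem places generic edge identifiability in $\exists\forall\IR$ (and $\forall\exists\IR$), and invoking Renegar's $\PSPACE$ decision procedure for fixed quantifier alternation yields the corollary, in direct parallel with Corollary~\ref{cor:gen_ident:pspace}. The paper itself gives no further detail beyond this one-line inference, so your elaboration is already more than what is needed.
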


\newcommand{\SAT}{\mathrm{SAT}}
\newcommand{\SATpp}{\mathrm{SAT}^{++}}

\section{Promise problems}
\label{sec:promise}

We give some background information on promise problems for the readers convenience.
In a classical decision problem $L$, we are given an input an we have decide whether $x\in L$
(the so-called yes-instances) or $x \notin L$ (the no-instances). For instance, in the classical 
$\SAT$ problem, we are given a formula $F$ in CNF. The yes-instances are the satisfiable formulas and the no-instances are the unsatisfiable one.

Promise problems have a third type of instances, the so-called do-not-care-instances. On these instances, an algorithm can do what it wants and give any output. For instance, consider the problem $\SATpp$, where ask the question of whether a satisfiable formula in CNF has another satisfying assignment. The yes-instances are all $F$ with at least two satisfying assignments, the no-instances are all $F$ with exactly one satisfying assignment, and the do-not-care-instances are all unsatisfiable $F$. An algorithm solving $\SATpp$ has to output ``yes'' on every $F$
with at least two satisfying assignments and ``no'' on every $F$ with exactly one satisfying assignment. On unsatisfiable formulas, it can output whatever it wants. Note that every classical decision problem is also a promise problem with the do-not-care-instances being the empty set.

We can also define many-one reductions for promise problems: A function $f: \{0,1\}^* \to \{0,1\}^*$ is called a many-one reduction from a promise problem $L$ to another promise problem $L'$, if $f$ maps yes-instances of $L$ to yes-instances of $L'$ and no-instances of $L$ to no-instances of $L'$. $f$ can map do-not-care-instances of $L$ to any instance of $L'$. By using a similar trick of encoding an additional satisfying assignment like in the case of $\ETRpp$, one can show that 
$\SATpp$ is $\NP$-hard, since we can reduce $\SAT$ to it. This reduction maps the unsatisfiable formulas (no-instances of $\SAT$) to formulas with one satisfying assignment (no-instances of $\SATpp$) and  satisfiable formulas (yes-instances of $\SAT$) to formulas with two or more satisfying assignments (yes-instances of $\SATpp$). Since $\SAT$ is a classical decision problem, there are no do-not-care-instances. $\SATpp$ is, however, not contained in $\NP$ for formal reasons, because $\NP$ only contains classical decision problems.

\section{Semialgebraic sets}

For the reader's convenience, we give a brief introduction to semialgebraic sets and discuss the notations important for this work. For details and proofs, we refer to the book \cite{real}.

A \emph{semialgebraic set} in $\IR^n$ is a finite Boolean combination (finite number of unions and intersections) of sets of the form $\{(x_1,\dots,x_n) \mid f(x_1,\dots,x_n) > 0\}$
and $\{(x_1,\dots,x_n) \mid g(x_1,\dots,x_n) \geq 0\}$. Here $f$ and $g$ are real polynomials in $n$ variables.

A \emph{semialgebraic function} is a function $\IR^n \to \IR^{n'}$ 
with a semialgebraic graph, that is, the
set of all $\{(x,f(x)) \mid x \in \IR^n \}$ is a semialgebraic set.

From this definition of semialgebraic sets, it is easy to see that semialgebraic sets are the solutions of \ETR-instances, that is, all $(x_1,\dots,x_n)$ satisfying $\varphi(x_1,\dots,x_n)$ in
(\ref{eq:etr:1}) form a semialgebraic set. From Tarski's theorem (see \citep{real}), it follows that semialgebraic sets allow quantifier elimination, that is, all $(x_1,\dots,x_n)$
satisfying
\[
   \exists y_1 \dots \exists y_t \psi(x_1,\dots,x_n,y_1,\dots,y_t)
\]
form a semialgebraic set, where  $\psi$ (like $\varphi$)  is a quantifier-free Boolean formula over the basis $\{\vee, \wedge, \neg\}$ and a signature consisting of the constants $0$ and $1$, the functional symbols $+$ and $\cdot$, and the relational symbols $<$, $\le$, and $=$. $\psi$ depends on two sets of variables. It is clear that all $(x_1,\dots,x_n,y_1,\dots,y_t)$ satisfying $\psi$ form a semialgebraic set. Tarski's theorem tells us that we still get a semialgebraic set when
we are projecting the $y_1,\dots,y_t$ away using the existential quantifiers. This is used frequently in our proofs.

\begin{definition} \label{def:semi:dim}
A semialgebraic set $S$ has dimension $d$ if there exists a
$d$-dimensional coordinate subspace such that the image of $S$ under the canonical projection
onto this subspace has a nonempty interior and there is no such subspace of dimension $d+1$.
\end{definition}

Above, by a $d$-dimensional coordinate subspace, we mean the subspace of $\IR^n$ of all points
$x$ such that $x_i = 0$ for $i \notin I$ for some subset $I \subseteq \{1,\dots,n\}$ 
and $|I| = d$.

\begin{theorem}[\cite{bochnak2013real}]
Let $S$ be a semialgebraic set. Then its dimension as a semialgebraic set (as in Definition~\ref{def:semi:dim}) equals the dimension of its Zariski closure as an algebraic set.
\end{theorem}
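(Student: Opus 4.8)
\emph{Proof proposal.} I would prove the two inequalities separately for a semialgebraic set $S\subseteq\IR^n$, writing $d$ for its dimension in the sense of Definition~\ref{def:semi:dim} and $e:=\dim\overline{S}^{\mathrm{Zar}}$ for the dimension of its Zariski closure. Throughout I would use two standard ingredients. The first is that a subset of $\IR^m$ with nonempty Euclidean interior is Zariski dense in $\IR^m$ (a polynomial vanishing on a ball vanishes identically), so $\IR^m$ has dimension $m$. The second is the classical fact that the Zariski closure of the image of an algebraic set under a polynomial map has dimension at most that of the source, and exactly that of the source when the map is dominant with generically finite fibres; I would import this from algebraic geometry over $\mathbb{C}$ by complexification, using that the dimension of a real algebraic set (Krull dimension of its coordinate ring) equals the complex dimension of its complexification.

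The inequality $d\le e$ is short. Suppose a coordinate projection $\pi:\IR^n\to\IR^{d}$ sends $S$ onto a set with nonempty interior. Then $\pi(\overline{S}^{\mathrm{Zar}})\supseteq\pi(S)$ also has nonempty interior, so by the first ingredient its Zariski closure is all of $\IR^{d}$, of dimension $d$; by the second ingredient applied to $\pi$ restricted to $\overline{S}^{\mathrm{Zar}}$, this closure has dimension at most $e$. Hence $d\le e$.

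The reverse inequality $e\le d$ is the substantive step and relies on cylindrical algebraic decomposition (CAD). First I would reduce to an irreducible target: decompose $\overline{S}^{\mathrm{Zar}}$ into irreducible components; by irredundancy of this decomposition every component meets $S$ in a Zariski-dense subset, so it suffices to pick a component $V$ with $\dim V=e$ and produce a coordinate projection $\pi$ with $\pi(S\cap V)$ of nonempty interior in $\IR^{e}$. Apply CAD to the semialgebraic set $S\cap V$, obtaining a finite partition into cells, each of which --- by the cylindrical section/sector structure of the construction --- is, after restricting a suitable coordinate projection, semialgebraically homeomorphic to a connected open subset of some $\IR^{c}$, over which the cell is the graph of a continuous semialgebraic map. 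Since $S\cap V$ is Zariski dense in the irreducible set $V$ and $V$ is the finite union of the Zariski closures of these cells, some cell $C$ has $\overline{C}^{\mathrm{Zar}}=V$, hence $\dim\overline{C}^{\mathrm{Zar}}=e$. But the graph structure of $C$ together with the fibre--dimension half of the second ingredient (the generic fibre of $\overline{C}^{\mathrm{Zar}}$ over the open set it projects onto is a single point) forces that open set to have dimension $c=e$. Then the associated coordinate projection sends $C$, and a fortiori $S$, onto a set with nonempty interior in $\IR^{e}$, so $d\ge e$.

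The main obstacle I anticipate is getting the CAD bookkeeping exactly right: one needs the standard but nontrivial fact that CAD cells can be taken to project, via coordinate projections, homeomorphically onto open cells of lower-dimensional coordinate spaces as graphs of semialgebraic functions, and one has to apply the ``dimension does not increase under polynomial maps'' principle only to genuine (complexified) algebraic sets, never to the semialgebraic cells directly, to avoid circular reasoning. With those tools available from \cite{bochnak2013real}, the two inequalities complete the proof.
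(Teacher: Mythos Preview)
The paper does not give its own proof of this theorem: it is merely stated with a citation to \cite{bochnak2013real}, and the next section begins immediately afterwards. There is thus no argument in the paper to compare your proposal against.

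On its own merits, your sketch follows the standard route one finds in the cited reference and is essentially correct. The direction $d\le e$ is fine as written. For $e\le d$ via CAD, the one step needing care is the claim that a cell $C$ which is the graph of a continuous map over an open $U\subseteq\IR^c$ has $\dim\overline{C}^{\mathrm{Zar}}=c$: this relies crucially on the map being \emph{semialgebraic}, so that each component satisfies a nontrivial polynomial relation and the generic fibres of the projection from $\overline{C}^{\mathrm{Zar}}$ are finite. (A merely continuous graph, e.g.\ that of $y=e^x$, has Zariski closure of strictly larger dimension, so ``graph structure'' alone is not enough.) You flag exactly this kind of bookkeeping at the end, so with the tools from \cite{bochnak2013real} available the argument goes through.
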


\section{Koiran's algorithm}
\label{sec:koiran}
Koiran mainly works in the so-called BSS-model of real computation. In this model, one is also allowed to use arbitrary real constants in the algorithm as well as real inputs. In $\existsR$,
we only allow the constants $0$ and $1$ and the inputs are given by some binary encoding. However, Koiran also considers the bit model. \cite[Theorem 6]{DBLP:journals/jc/Koiran99} proves the computational equivalence of $\DIM$ and $\FFEAS$ in the bit model. Since $\FFEAS$
is $\existsR$-complete \citep{DBLP:journals/mst/SchaeferS17}, this implies that
$\DIM$ is $\existsR$-complete. Note that at the time Koiran wrote his paper,
the class $\existsR$ was not formally defined and therefore, Koiran does not mention it explicitly.

For the moreover part, note that \cite[Section 1.1]{DBLP:journals/jc/Koiran99} discusses the representations of semi-algebraic sets that are supported by his proof. There he mentions existentially quantified formulas explicitly.

\end{document}